\documentclass{article}

\usepackage[preprint]{neurips_2026}

\usepackage[utf8]{inputenc} % allow utf-8 input
\usepackage[T1]{fontenc}    % use 8-bit T1 fonts
\usepackage{hyperref}       % hyperlinks
\usepackage{url}            % simple URL typesetting
\usepackage{booktabs}       % professional-quality tables
\usepackage{amsfonts}       % blackboard math symbols
\usepackage{nicefrac}       % compact symbols for 1/2, etc.
\usepackage{microtype}      % microtypography
\usepackage{xcolor}         % colors

\usepackage{microtype}
\usepackage{graphicx}
\usepackage{booktabs}
\usepackage{amsmath,amssymb,amsthm}
\usepackage{mathtools}
\usepackage{bbm}
\usepackage{enumitem}
\usepackage{algorithm}
\usepackage{algpseudocode}
\usepackage{hyperref}
\usepackage{bm}
\usepackage{microtype}
\usepackage{graphicx}
\usepackage{subcaption}
\usepackage{booktabs} % for professional tables
\usepackage{tikz}
\usetikzlibrary{arrows.meta}
\usepackage{tikz}
\usetikzlibrary{positioning,arrows.meta}
\usepackage{dirtytalk}

\newcommand{\R}{\mathbb{R}}
\newcommand{\E}{\mathbb{E}}

\newcommand{\ip}[2]{\langle #1,#2\rangle}

\newcommand{\eps}{\varepsilon}
\newcommand{\iid}{\stackrel{\mathrm{i.i.d.}}{\sim}}

\theoremstyle{plain}
\newtheorem{theorem}{Theorem}[section]

\newtheorem{lemma}[theorem]{Lemma}
\newtheorem{corollary}[theorem]{Corollary}
\theoremstyle{definition}
\newtheorem{definition}[theorem]{Definition}

\theoremstyle{remark}
\newtheorem{remark}[theorem]{Remark}
\title{MaxSketch: Robust Distinct Counting in Streams via Random Projections}

\author{
\begin{tabular}{l}
Nikos Tsikouras$^{1,2}$ \quad Constantine Caramanis$^{2,3}$ \quad Christos Tzamos$^{1,2}$
\vspace{-9pt}\\
\\
\vspace{-1pt}
\normalfont
$^{1}$ National and Kapodistrian University of Athens, Greece \\
\vspace{-1pt}
\normalfont
$^{2}$ Archimedes, Athena Research Center, Greece \\
\vspace{-1pt}
\normalfont
$^{3}$ The University of Texas at Austin, USA \\
\end{tabular}
}

\begin{document}

\maketitle

\begin{abstract}
    Estimating the number of distinct elements in a data stream is well understood when repeated elements are identical. In modern settings, however, observations are high-dimensional and noisy, so repeated instances of the same object are only approximately similar -- for example, different images of the same individual may vary significantly at the pixel level. Classical sketches such as HyperLogLog rely on consistent hash values for identical elements and break down in this regime. Recent work on robust distinct counting in general metric spaces achieves $\tilde\Theta(\sqrt{n})$ memory, which is tight in the worst case. We show that substantially improved memory guarantees are possible under geometric structure common in learned representations. We introduce MaxSketch, a simple max-linear sketch built from random Gaussian projections, and prove that it succeeds in estimating the number of distinct latent objects. Concretely, we show that under this assumption $m = \tilde{O}(\log n / \varepsilon^2)$ random projections (and hence $\tilde{O}(\log n/\varepsilon^2)$ memory) suffice to recover the true distinct count within a $(1+\varepsilon)$ factor. Experiments on image streams confirm that MaxSketch accurately estimates distinct counts and generalizes beyond the training regime. Our results bridge classical streaming algorithms and modern representation learning, showing how geometric structure can fundamentally reduce the complexity of distinct counting.
\end{abstract}

% ============================
\section{Introduction}
% ============================

Counting the number of distinct elements in a data stream is a classical algorithmic problem with deep connections to randomized algorithms, information theory, and large-scale data analysis. In its canonical form, the task is to estimate the number of unique keys in a stream using sublinear memory and a single pass over the data. A long line of work has shown that this is possible with striking efficiency: algorithms such as \cite{durand2003loglog,flajolet2007hyperloglog,flajolet1985probabilistic} achieve accurate estimates using only logarithmic memory by maintaining carefully designed randomized sketches of the stream.

These classical results, however, rely on a crucial assumption: the same element is represented identically every time it appears. In modern data modalities, this assumption is fundamentally violated. To see this, consider the task of counting the number of distinct people passing through a busy intersection observed by a camera. Suppose Alice and Bob each pass through once, and later Alice passes through again. Although it is the same underlying individual, her second appearance may differ substantially at the pixel level, due to changes in lighting, viewpoint, or pose. As a result, the two observations of Alice may be nearly as different from each other as they are from observations of entirely different individuals.

% A natural approach is to embed each observation into a continuous feature space and apply distinct counting in that representation. However, this does not resolve the issue: as we have mentioned already repeated observations of the same underlying object are no longer identical, causing sketching methods to overcount, while clustering-based approaches require maintaining a number of representatives that grows with the number of distinct elements, undermining the sublinear memory guarantees of streaming algorithms. This raises a central question:

A natural approach is to embed each photo into a continuous feature space and apply distinct counting in that representation. However, this does not resolve the issue: repeated observations of the same underlying object are no longer identical, causing sketching methods to overcount, while clustering-based approaches require maintaining a number of representatives that grows with the number of distinct elements, undermining the sublinear memory guarantees of streaming algorithms.

A recent line of work \cite{chen2016streaming, chen2018distinct, zhang2025robust} has studied this \textit{robust} distinct counting problem, where elements are considered equal if they are sufficiently close in some metric. While these results give algorithms with non-trivial guarantees, they also reveal that the problem is fundamentally harder than its classical counterpart: \cite{zhang2025robust} establishes an $\Omega(\sqrt{n})$ memory lower bound in general metric spaces, exponentially far from the rates achievable under exact key equality.

Modern representation learning, however, often induces a more useful geometric structure than the distance-based separation studied above: observations of the same object form tight clusters in embedding space, while different objects correspond to well-separated directions.

This raises a central question:

\emph{Closeness of representations in an arbitrary metric space is not strong enough to allow robust distinct counting with polylogarithmic memory (${\Omega}(\sqrt{n})$ lower bound in \cite{zhang2025robust}). Does clustered geometry with intra-class closeness and inter-class separation bypass this barrier?}

This question sits at the intersection of classical streaming algorithms and modern representation learning. On the one hand, the theory of distinct counting demonstrates that extreme-value statistics of random projections contain sufficient information about the number of unique elements \cite{flajolet2007hyperloglog,kane2010optimal}. On the other hand, permutation-invariant architectures such as DeepSets \cite{zaheer2017deep} can in principle aggregate set-structured data, but provide no theoretical guarantees on recovering global combinatorial properties like distinct counts. In this work, we bridge this gap with a deliberately minimal max-linear sketch that is both analytically tractable and provably correct.

\subsection{A geometric model for distinct objects}

% We assume that each distinct object corresponds to an unknown latent direction in a high-dimensional space, and that each observation is a noisy but well-aligned version of its object's direction. Different objects occupy well-separated directions, with low inner products between distinct centers. This is not an abstraction: contrastive and classification objectives explicitly encourage this configuration, and the neural collapse phenomenon formalizes why it emerges in the terminal phase of training even without explicit geometric supervision \cite{papyan2020prevalence, zhou2022optimization}.

We assume that each distinct object corresponds to an unknown latent direction in a high-dimensional space, and that each observation is a noisy but well-aligned version of its object’s direction. Different objects occupy well-separated directions, with low inner products between distinct centers. This is not an abstraction: the neural collapse phenomenon formalizes why this configuration emerges in the terminal phase of deep network training \cite{papyan2020prevalence, zhou2022optimization}.

Under this model, the number of distinct objects corresponds to the number of well-separated directions generating the stream. The challenge is to recover this number without explicitly clustering and without storing the embeddings.

The $\Omega(\sqrt{n})$ lower bound of \cite{zhang2025robust} is achieved by adversarial metric spaces with no geometric structure. Our setting is not a special case that sidesteps this lower bound; it is the regime that actually occurs in practice, and one that \cite{zhang2025robust} explicitly leaves open. It is precisely this inner-product structure that enables Gaussian extreme-value statistics whose expectations separate different cardinalities, yielding polylogarithmic memory where the worst-case general-metric setting requires $\tilde{\Omega}(\sqrt{n})$.

% This angular structure places us outside the worst-case general-metric setting of \cite{zhang2025robust}. In particular, \cite{zhang2025robust} proves an $\Omega(\sqrt n)$ lower bound for robust distinct counting in general metric spaces, while explicitly leaving open whether such a lower bound holds even for well-shaped datasets. Our assumptions are stronger still: we require a Euclidean embedding in which observations are tightly aligned with latent centers and distinct centers have low inner products. This inner-product structure enables Gaussian extreme-value statistics whose expectations separate different cardinalities, yielding polylogarithmic memory rather than the $\tilde{\Omega}(\sqrt n)$ worst-case behavior in the general-metric setting.

\subsection{MaxSketch: a max-linear analogue of classical sketches}
Our key observation is that max aggregation discards multiplicity by construction: once a cluster has contributed a large projection value, additional repetitions of the same object leave the sketch unchanged. This contrasts sharply with sum or mean aggregation, which scale with frequency and therefore confuse distinctness with multiplicity. Given random Gaussian vectors $w_1, \dots, w_m \in \mathbb{R}^d$, we compute

\[M_j(X) = \max_{1\le i\le n} \langle w_j, x_i\rangle\]

where $x_1, \dots, x_n \in \mathbb{R}^d$ are observations generated from $k$ underlying objects,
and average these maxima across projections. The expectation of this statistic grows monotonically with the number of distinct latent directions — the key signal we wish to recover. This operation is permutation-invariant, streaming-compatible, and requires storing only $m$ scalars.

% This operation is permutation-invariant, streaming-compatible, and requires storing only 
% $m$ scalars.

We call this representation MaxSketch. Unlike classical sketches such as HyperLogLog, MaxSketch operates directly on continuous embeddings and tolerates noise and approximate equality. Unlike general-purpose neural set models, MaxSketch is deliberately minimal and analytically tractable.

\subsection{Theoretical guarantees}

Our main result shows that MaxSketch provably recovers the number of distinct objects up to multiplicative error.

\textbf{Informal Theorem \ref{thm:main} (Robust Distinct Count Estimation).}
Under a natural clusterability assumption, $m = \tilde{O}(\log n/\varepsilon^2)$ random projections suffice to estimate the number of distinct objects in a stream of length $n$ within a $(1+\varepsilon)$ multiplicative error, with high probability. 

Compared to the $\tilde{\Omega}(\sqrt{n})$ memory required in general metric spaces~\cite{zhang2025robust}; the improvement is enabled by separation between cluster centers, a property missing from the worst-case
instances underlying that bound.%

Our analysis combines Gaussian process theory and extreme-value statistics including Slepian's inequality and concentration of Lipschitz functions
\cite{boucheron2003concentration, ledoux1991probability, slepian1962one}, to
show that, despite correlations induced by approximate cluster structure, the
maximum projection behaves like the maximum of independent Gaussians up to
controlled error terms.

This is a key distinction with other methods. We view MaxSketch as a statistics-preserving embedding: rather than preserving distances or geometry, it maps a set of observations to a scalar statistic that is monotone in the true distinct count. As a result, even when the structural assumptions hold only approximately in practice, the induced statistic remains informative, as illustrated empirically in the face counting experiments of Section \ref{subsection:counting faces}.

% This is a key distinction with other methods. We view MaxSketch as a statistics preserving embedding; which encodes information about cardinality rather than preserving exact geometry. 

Importantly, once the random projections are fixed, the entire dependence of the sketch on the input sequence collapses to a \textbf{single scalar statistic} that is monotone in the true number of distinct objects. As a consequence, learning from data reduces to fitting a one-dimensional monotone readout, a task that is sample-efficient and well-conditioned (Theorem \ref{thm:learn}).

\subsection{End-to-end learning from count-only supervision}

The theoretical guarantees are stated for fixed Gaussian projections, but MaxSketch slots naturally into an end-to-end pipeline: an encoder maps raw inputs to embeddings, MaxSketch aggregates them, and a learned readout produces the count. Our theory does not assume that projections or embeddings are learned optimally; it establishes only that a compact sketch \emph{exists} whenever the embeddings satisfy the geometric assumption. We examine this empirically in Section \ref{sec:experiments} on MNIST and CIFAR-10, where end-to-end training under count-only supervision induces well-clustered embeddings on simple data, with performance tracking embedding quality as intra-class variability grows.

\subsection{Contributions}
\label{sec:contributions}
\begin{itemize}
    \item \textbf{MaxSketch}, a simple max-linear, permutation-invariant sketch
    inspired by classical extreme-value methods. We prove a
    $(1+\varepsilon)$-multiplicative guarantee using $\tilde{O}(\log n /
    \varepsilon^2)$ memory under a natural clusterability assumption improving on the $\tilde{O}(\sqrt{n})$ rate for general
    metric spaces~\cite{zhang2025robust}.

    \item \textbf{A learning reduction} showing that, once projections are
    fixed, count prediction reduces to fitting a one-dimensional
    monotone function of a fixed scalar statistic.

    \item \textbf{Empirical validation} on MNIST, CIFAR-10, and three merged face datasets, demonstrating extrapolation to sequences longer than those seen at training time and accuracy that tracks embedding geometry rather than sequence length.
\end{itemize}

\subsection{Related Work}

\paragraph{Distinct counting and sketching.}
Estimating the number of distinct elements in a data stream is a classical problem in algorithms, with celebrated solutions such as Flajolet–Martin, LogLog, and HyperLogLog achieving near-optimal accuracy using logarithmic memory via randomized hashing and extreme-value statistics \cite{durand2003loglog,flajolet2007hyperloglog,flajolet1985probabilistic,kane2010optimal}. These methods crucially rely on exact key equality: repeated occurrences of the same element must hash to the same value. A natural idea in continuous settings is to first apply locality-sensitive hashing (LSH) \cite{indyk1998approximate, indyk1997locality} and then run HyperLogLog on the resulting hash codes. However, this approach fundamentally fails for distinct counting: LSH guarantees collision for some near neighbors, not all repetitions of the same object, and its collision probability depends on distance in a way that entangles object identity with multiplicity. As a result, LSH-based preprocessing introduces bias that cannot be corrected by classical sketches.

\paragraph{Robust distinct counting under near-duplicates.}

A line of work initiated by \cite{chen2016streaming,chen2018distinct} studies robust distinct counting under near-duplicates,
including well-shaped datasets in low-dimensional Euclidean spaces and, more recently, general
metric spaces. Zhang \cite{zhang2025robust} gives $\tilde O(\sqrt n/\varepsilon)$-space algorithms in general metric spaces and proves an $\Omega(\sqrt n)$ lower bound for general noisy datasets, characterizing the worst-case cost of robust deduplication without additional
geometric structure. Importantly, \cite{zhang2025robust} explicitly leaves open whether the same
lower bound holds even for well-shaped inputs.

MaxSketch addresses a different, more structured regime. The stream is embedded in a Euclidean
space, observations are tightly aligned with latent centers, and distinct centers have small inner
products. This inner-product structure is stronger than a metric distance-separation condition:
it allows Gaussian projection maxima to behave as extreme-value statistics whose expectations
separate different cardinalities. Under this clusterability assumption, MaxSketch achieves
$\tilde O(\log n/\varepsilon^2)$ memory, showing that substantially smaller sketches are possible
in structured learned representations than in the worst-case general-metric setting.

\section{Problem Setting}

We observe a stream of unit vectors \(
X = \langle \tilde{x}_1, \dots, \tilde{x}_n \rangle \subset \mathbb{S}^{d-1},\) arriving in arbitrary order. Each $\tilde{x}_i$ is a noisy observation of one of $k^\star$ unknown latent objects; the same object may appear multiple times, with each appearance producing a different vector. Our goal is to estimate $k^\star$ in a single pass using memory sublinear in $n$, with supervision (when available) limited to the total count.

This setting differs from classical distinct-elements counting in that exact equality between repeated occurrences is replaced by approximate geometric similarity. To make the problem well-posed, we assume the stream has the following structure.

\begin{definition}[$(\eta,\rho)-$clusterable sequence]\label{def:clusterable}
Fix $\eta \in (0,1)$ and $\rho \in [0,1)$. A sequence
\(X = \langle \tilde{x}_1, \dots, \tilde{x}_n \rangle\) of unit vectors in $\mathbb{R}^d$ is \emph{($\eta,\rho$)-clusterable} with $k^\star$ centers if there exist distinct unit vectors \(
x^{(1)}, \dots, x^{(k^\star)} \in \mathbb{R}^d
\) and an assignment $r : [n] \to [k^\star]$ such that:
\begin{enumerate}[leftmargin=1.2em, itemsep=0.2em]
    \item (\textbf{Within-cluster alignment}): 
    \(
    \langle \tilde{x}_i, x^{(r(i))} \rangle \geq 1 - \eta/2
    \quad \text{for all } i.
    \)
    
    \item (\textbf{Cross-cluster separation}): 
    \(
    |\langle x^{(r)}, x^{(s)} \rangle| \leq \rho
    \quad \text{for all } r \neq s.
    \)
\end{enumerate}
\end{definition}

The first condition says each observation is directionally close to its center (capturing noise, augmentation, or pose variation); the second says distinct centers are well-separated, with $\rho$ controlling approximate independence under Gaussian projection. This assumption formalizes a structure widely observed in learned representations. When training embeddings for classification, or other self-supervised objectives, the learned space typically exhibits clustering by semantic identity: embeddings of the same object (or class) cluster tightly, while distinct objects occupy well-separated directions. Recent work on neural collapse \cite{han2021neural,papyan2020prevalence, zhou2022optimization, zhu2021geometric} formalizes this phenomenon in deep networks.  Our results do not require learning this geometry, only that it exists.

\section{MaxSketch: A Max-Linear Streaming Representation}

We now describe a simple sketch that exploits this geometry.

\begin{definition}[\textbf{MaxSketch}]
Fix an integer $m$ and let $w_1, \dots, w_m \sim N(0,I_d)$ be independent random vectors, fixed once and for all. Given a stream $X = \langle x_1, \dots x_n \rangle$ define

\[
M_j(X) = \max_{1\le i\le n} \langle w_j, x_i\rangle, \hspace{0.3cm}
S(X) = \frac{1}{m}\sum_{j=1}^m M_j(X).
\]

The sketch stores only the 
$m$ scalar values $M_1, \dots, M_m$, requiring $O(m)$ memory and supporting single-pass updates.

\paragraph{Why max aggregation?}

Max aggregation discards multiplicity information by construction: once a cluster has contributed a large projection value, additional repetitions do not change the sketch. This sharply contrasts with sum or mean aggregation, which scale with frequency and therefore confuse distinctness with multiplicity. As we show, under clusterability, maxima of random projections isolate extreme-value statistics that depend primarily on the number of distinct latent directions.
\end{definition}

\section{Theoretical Guarantees}\label{sec:theory}
We show that MaxSketch estimates the number of distinct clusters under $(\eta, \rho)-$clusterability. The analysis has three steps: (i) characterize the extreme-value behavior of Gaussian projections onto well-separated centers, (ii) bound the perturbation introduced by within-cluster noise, and (iii) combine an expectation gap with concentration to distinguish different cluster counts using logarithmically many projections. 

\subsection{Gaussian maxima over cluster centers}

Fix unit vectors $x^{(1)}, \dots, x^{(k)} \in \mathbb{R}^d$ and $w \sim \mathcal{N}(0, I_d)$, and let \(G_r := \langle w, x^{(r)} \rangle.\)
Then $G = (G_1, \dots, G_k)$ is jointly Gaussian with mean zero, unit variances, and
\(\operatorname{Cov}(G_r, G_s) = \langle x^{(r)}, x^{(s)} \rangle.\) We control $\mathbb{E}[\max_r G_r]$ when the centers are well-separated by comparing to the i.i.d.\ case, where extreme-value asymptotics are well understood.

\begin{lemma}[Gaussian maxima comparison]\label{lem:slepian}
Let $G_1, \dots, G_k$ be mean-zero Gaussians with $\operatorname{Var}(G_r) = 1$ and $|\operatorname{Cov}(G_r, G_s)| \leq \rho$, for $r \neq s$, where $\rho \in [0,1)$ and let $Z_1, \dots, Z_k \stackrel{\text{i.i.d.}}{\sim} \mathcal{N}(0, 1)$. Then
\[
\sqrt{1 - \rho}\, \mathbb{E}\!\left[\max_{r \leq k} Z_r\right]
\;\leq\;
\mathbb{E}\!\left[\max_{r \leq k} G_r\right]
\;\leq\;
\sqrt{1 + \rho}\, \mathbb{E}\!\left[\max_{r \leq k} Z_r\right].
\]
\end{lemma}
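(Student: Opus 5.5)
The plan is to obtain both inequalities from a single Gaussian comparison principle: the Sudakov--Fernique form of Slepian's inequality. It states that if $(U_r)_{r\le k}$ and $(V_r)_{r\le k}$ are centered Gaussian vectors with $\mathbb{E}[(U_r-U_s)^2]\le \mathbb{E}[(V_r-V_s)^2]$ for all $r,s$, then $\mathbb{E}[\max_r U_r]\le \mathbb{E}[\max_r V_r]$. This version compares only increment variances and needs no equal-variance hypothesis. That makes it more convenient than the classical Slepian lemma here, since the comparison processes below have variances $1\pm\rho$ rather than $1$.

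First I compute the increments of $G$. Write $c_{rs}:=\operatorname{Cov}(G_r,G_s)$, so $|c_{rs}|\le\rho$ for $r\neq s$. Unit variances give
\[
\mathbb{E}[(G_r-G_s)^2]=2-2c_{rs}\in[\,2-2\rho,\;2+2\rho\,].
\]
Next define the two comparison processes $Y^{\pm}_r:=\sqrt{1\pm\rho}\,Z_r$. Independence of the $Z_r$ gives, for $r\neq s$,
\[
\mathbb{E}[(Y^{\pm}_r-Y^{\pm}_s)^2]=2(1\pm\rho).
\]
For $r=s$ all increments vanish, so the required inequalities hold trivially.

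For the upper bound, apply Sudakov--Fernique with $U=G$ and $V=Y^+$, which is allowed because $2-2c_{rs}\le 2+2\rho$. For the lower bound, apply it with $U=Y^-$ and $V=G$, which is allowed because $2-2\rho\le 2-2c_{rs}$. Positive scaling commutes with the maximum, so
\[
\mathbb{E}\bigl[\max_r Y^{\pm}_r\bigr]=\sqrt{1\pm\rho}\,\mathbb{E}\bigl[\max_r Z_r\bigr],
\]
and the two displayed inequalities of the lemma follow. The factor $\sqrt{1-\rho}$ is well defined because $\rho<1$.

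The only delicate point is the choice of comparison theorem. The classical Slepian inequality compares covariances under equal variances, and applying it directly would force a device such as $Y_r=\sqrt{1-\rho}\,Z_r+\sqrt{\rho}\,Z_0$ with a shared $Z_0$. That device handles the lower bound cleanly, since $\mathbb{E}[\sqrt{\rho}Z_0]=0$ drops out of the expected maximum. It does not cover the upper bound, because negative correlations down to $-\rho$ cannot be dominated that way. Relying on Sudakov--Fernique avoids this asymmetry. I would cite it from \cite{ledoux1991probability} and check only that its hypotheses (centered, finite index set, increment domination) hold, which they do.
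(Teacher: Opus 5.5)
Your proposal is correct and matches the paper's proof: both bounds come from Sudakov--Fernique, and the upper bound uses the same comparison process $\sqrt{1+\rho}\,Z_r$. The only difference is in the lower bound, where the paper compares against $\sqrt{1-\rho}\,Z_r+\sqrt{\rho}\,Z$ (the shared-factor device you mention) instead of $\sqrt{1-\rho}\,Z_r$; the two processes have identical increments and the shared term vanishes in expectation, so your simplification is valid because Sudakov--Fernique needs no equal-variance hypothesis.
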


\subsection{Robustness to within-cluster perturbations}

Maxima over observed points differ from maxima over centers by an amount controlled by the within-cluster slack $\eta$.

\begin{lemma}[Perturbation bound]\label{lem:perturb}
Let $\tilde{X} = \langle \tilde{x}_1, \dots, \tilde{x}_n \rangle \subset \mathbb{S}^{d-1}$ be
$(\eta, \rho)$-clusterable with centers $x^{(1)}, \dots, x^{(k)} \in \mathbb{S}^{d-1}$
and assignment $r : [n] \to [k]$, so that
$\langle \tilde{x}_i, x^{(r(i))} \rangle \geq 1 - \eta/2$ for all $i \in [n]$.
Let $w \sim \mathcal{N}(0, I_d)$. Then
\[
\Big|\,\mathbb{E}\!\left[\max_{i \leq n} \langle w, \tilde{x}_i \rangle\right]
-
\mathbb{E}\!\left[\max_{r \leq k} \langle w, x^{(r)} \rangle\right]\Big|
\;\leq\; \sqrt{2\eta \ln n}.
\]
\end{lemma}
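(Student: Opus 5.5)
The plan is to bound the difference pointwise in $w$ by the maximum of the residual process $\langle w, \tilde x_i - x^{(r(i))}\rangle$, and then bound the expectation of that maximum with the standard Gaussian maximal inequality. Write $\tilde x_i = x^{(r(i))} + e_i$. Then $\|e_i\|^2 = 2 - 2\langle \tilde x_i, x^{(r(i))}\rangle \le \eta$ by within-cluster alignment, so each $\langle w, e_i\rangle$ is a centered Gaussian with variance at most $\eta$.

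\textbf{Pointwise comparison.} For every realization of $w$,
\[
\max_{i\le n}\langle w,\tilde x_i\rangle \le \max_{i\le n}\big(\langle w, x^{(r(i))}\rangle + \langle w,e_i\rangle\big) \le \max_{r\le k}\langle w,x^{(r)}\rangle + \max_{i\le n}\langle w,e_i\rangle .
\]
For the reverse direction I will assume, as is implicit, that every center is used, i.e.\ $r$ is surjective. Fix $r^\ast$ attaining $\max_r \langle w,x^{(r)}\rangle$ and pick any $i$ with $r(i)=r^\ast$. Then
\[
\max_{i'}\langle w,\tilde x_{i'}\rangle \ge \langle w,x^{(r^\ast)}\rangle + \langle w,e_i\rangle \ge \max_r\langle w,x^{(r)}\rangle - \max_{i\le n}\langle w,-e_i\rangle .
\]

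\textbf{Taking expectations.} Both sides reduce to bounding $\mathbb{E}\max_{i\le n}\langle w,\pm e_i\rangle$, a maximum of $n$ centered Gaussians with variance at most $\eta$. I will invoke the standard bound $\mathbb{E}\max_{i\le n} Y_i \le \sigma\sqrt{2\ln n}$. Its proof is short: by Jensen, $e^{\lambda \mathbb{E}\max Y_i} \le \sum_i \mathbb{E} e^{\lambda Y_i} \le n e^{\lambda^2\sigma^2/2}$, and one optimizes $\lambda$. With $\sigma^2=\eta$ this gives $\sqrt{2\eta\ln n}$. Combining the two one-sided inequalities yields the absolute-value bound. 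The case $n=1$ (so $k=1$) is fine, since then both expectations are $0$.

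\textbf{Main obstacle.} The only delicate point is the lower direction, which needs each center to have at least one assigned observation. Without that assumption the comparison can fail outright, because an unused center inflates $\mathbb{E}\max_r\langle w,x^{(r)}\rangle$. I will therefore state that $r$ is onto, or equivalently that $k$ counts only the occupied clusters, consistent with $k^\star$ being the number of objects that actually appear in the stream. Nothing else is subtle, and integrability is immediate because all quantities are maxima of finitely many Gaussians.
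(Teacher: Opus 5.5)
Your proof is correct and follows essentially the same route as the paper: write $\tilde{x}_i = x^{(r(i))} + e_i$ with $\|e_i\|^2 \le \eta$, compare the two maxima pointwise through the residual process $\langle w, e_i\rangle$, and finish with the Gaussian maximal inequality. Your two one-sided comparisons are slightly cleaner than the paper's use of $|\max_i a_i - \max_i b_i| \le \max_i |a_i - b_i|$. The paper's route leads to bounding $\mathbb{E}\max_i |\langle w, e_i\rangle|$, a maximum of $2n$ Gaussians, for which the standard argument gives only $\sqrt{2\eta\ln(2n)}$, and the claimed bound $\sqrt{2\eta\ln n}$ fails at $n=1$. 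You also rightly make explicit that $r$ must be surjective, which the paper uses silently when it identifies $\max_i \langle w, x^{(r(i))}\rangle$ with $\max_{r\le k}\langle w, x^{(r)}\rangle$.
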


\subsection{Separating different numbers of clusters}

To estimate the number of clusters, we must distinguish between $k$ and $(1+\varepsilon)k$ centers via their expected maxima. 

\begin{lemma}[Gap in Gaussian maxima]\label{lem:gap-iid}
Let $w \sim \mathcal{N}(0, I_d)$ and let $x^{(1)}, \dots, x^{((1+\varepsilon)k)} \in \mathbb{R}^d$ be unit vectors with $|\langle x^{(i)}, x^{(j)} \rangle| \leq \rho$, where $\rho = c_\rho \varepsilon / \ln k$. Define
\[
\Delta_{\mathrm{centers}}(k) := \mathbb{E}\!\left[\max_{1 \leq i \leq (1+\varepsilon) k} \langle w, x_i \rangle\right]
- \mathbb{E}\!\left[\max_{1 \leq i \leq k} \langle w, x_i \rangle\right].
\]
Then $\Delta_{\mathrm{centers}}(k) = \Omega(\varepsilon / \sqrt{\ln k})$ for all $k \geq 2$.
\end{lemma}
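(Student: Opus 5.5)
The plan is to reduce to the i.i.d. case using Lemma~\ref{lem:slepian}, and then to control the resulting multiplicative errors with known bounds on maxima of i.i.d.\ Gaussians. Write $\mu(k) := \mathbb{E}[\max_{r\le k} Z_r]$ for i.i.d.\ standard normals. Applying Lemma~\ref{lem:slepian} to the $(1+\varepsilon)k$ centers (lower bound) and to the first $k$ centers (upper bound) gives
\[
\Delta_{\mathrm{centers}}(k) \;\ge\; \sqrt{1-\rho}\,\mu((1+\varepsilon)k) - \sqrt{1+\rho}\,\mu(k)
\;=\; \sqrt{1-\rho}\,\bigl[\mu((1+\varepsilon)k)-\mu(k)\bigr] - \bigl(\sqrt{1+\rho}-\sqrt{1-\rho}\bigr)\mu(k).
\]
Since $\sqrt{1+\rho}-\sqrt{1-\rho}\le 2\rho$ and $\mu(k)\le\sqrt{2\ln k}$, the error term is at most $2\rho\sqrt{2\ln k} = 2\sqrt2\, c_\rho\,\varepsilon/\sqrt{\ln k}$. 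This is of the same order as the target. The proof therefore reduces to an i.i.d.\ gap estimate $\mu((1+\varepsilon)k)-\mu(k)\ge c_0\,\varepsilon/\sqrt{\ln k}$ with an absolute constant $c_0$, after which one chooses $c_\rho$ small enough that $2\sqrt2 c_\rho < c_0/2$. (Here $\sqrt{1-\rho}\ge 1/\sqrt2$ for $\rho\le 1/2$.)

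For the i.i.d.\ gap, I would use the representation $\mu(N)=\int_{-\infty}^{\infty}\bigl(\mathbf 1\{t>0\}-\Phi(t)^N\bigr)\,dt$. This gives
\[
\mu(N')-\mu(N)=\int_{-\infty}^{\infty}\Phi(t)^N\bigl(1-\Phi(t)^{N'-N}\bigr)\,dt,
\]
with $N=k$ and $N'=\lceil(1+\varepsilon)k\rceil$. I would then restrict to the window $t\in[a_k, a_k+1/a_k]$, where $a_k$ solves $k\bar\Phi(a_k)=1$, so that $a_k\asymp\sqrt{2\ln k}$. On this window the Mills-ratio estimates $\bar\Phi(t)\asymp \varphi(t)/t$ give $\bar\Phi(t)\asymp 1/k$. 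Using $\Phi^N=(1-\bar\Phi)^N\ge e^{-2N\bar\Phi}$, the factor $\Phi(t)^k$ is bounded below by a constant. Also $1-\Phi(t)^{\varepsilon k}\ge 1-e^{-\varepsilon k\bar\Phi(t)}\gtrsim \min(1,\varepsilon)$, assuming $\varepsilon\le 1$ and $\varepsilon k\ge 1$ so that at least one extra center exists. Integrating over a window of length $1/a_k\asymp 1/\sqrt{\ln k}$ yields the claimed $\Omega(\varepsilon/\sqrt{\ln k})$.

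The main obstacle is making these constants uniform in $k$, especially for small $k$ such as $k=2,3$, where the asymptotics $a_k\sim\sqrt{2\ln k}$ and the Mills-ratio bounds are loose. I would handle $k\ge k_0$ with the asymptotic Mills bounds and treat $k<k_0$ directly. In that range $\mu(k+1)-\mu(k)$ is bounded below by a positive absolute constant, since it is strictly increasing and equal to $\int\Phi^k\bar\Phi\,dt>0$. This covers the finitely many cases, possibly after shrinking $c_\rho$. A second subtlety is that the $\sqrt{1\pm\rho}$ error from Lemma~\ref{lem:slepian} is multiplicative on $\mu(k)\sim\sqrt{2\ln k}$. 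The scaling $\rho = c_\rho\varepsilon/\ln k$ is exactly what turns this into an $O(\varepsilon/\sqrt{\ln k})$ additive term, so it can be absorbed only by a small enough absolute constant $c_\rho$. I would state explicitly that $c_\rho$ is chosen after $c_0$.
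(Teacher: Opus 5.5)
Your proposal is correct and follows essentially the same route as the paper. Both arguments use the Slepian/Sudakov--Fernique reduction to i.i.d.\ maxima. Both obtain an i.i.d.\ gap of order $\varepsilon/\sqrt{\ln k}$ from a window of width $\Theta(1/\sqrt{\ln k})$ just above the $\Theta(1/k)$ upper quantile; the paper writes this gap as $\mathbb{E}[(N-M)_+]$, which equals your $\int \Phi^k(1-\Phi^{k'-k})\,dt$, and lower-bounds it by $\Pr(M\le t)\,\mathbb{E}[(N-t)_+]$. Both absorb the $O(\rho\sqrt{\ln k}) = O(c_\rho\varepsilon/\sqrt{\ln k})$ correlation error by choosing $c_\rho$ small.

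The differences are cosmetic. The paper avoids your large/small-$k$ split with the uniform hazard-rate bound $\phi(u)/(1-\Phi(u))\le 2(u+1)$ for $u\ge 0$, on a window of width $1/(2(t+2))$ with $1-\Phi(t)=1/(2k)$. Your side assumption $\varepsilon k\ge 1$ is unnecessary, since $\lceil(1+\varepsilon)k\rceil - k \ge \max(1,\varepsilon k)$ always.
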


To estimate the true $k$ from $S(\tilde{X})$, we must ensure that between-cluster 
correlations and within-cluster noise does not obscure the signal, and that a $(1+\varepsilon)$ change in cluster count produces a 
gap large enough to detect. Lemmas~\ref{lem:slepian}--\ref{lem:gap-iid} 
establish exactly these three facts, and together they make the main counting theorem 
possible.
Lemmas~\ref{lem:slepian}--\ref{lem:gap-iid} are proved in Appendix~\ref{app:prelims}.

\subsection{Main counting theorem}

\begin{theorem}[Estimating the number of clusters]\label{thm:main}
Fix $n \geq 2$, $\varepsilon \in (0, 1/2]$, $\delta \in (0, 1)$. Let $\tilde{X} \subset \mathbb{S}^{d-1}$ be $(\eta, \rho)$-clusterable with $k^\star \ge 2$ centers, where
\(\rho \leq c_\rho \frac{\varepsilon}{\ln n}\) and \(\eta \leq c_\eta \frac{\varepsilon^2}{(\ln n)^2},\)
for absolute constants $c_\rho, c_\eta > 0$. Draw $w_1, \dots, w_m \stackrel{\text{i.i.d.}}{\sim} \mathcal{N}(0, I_d)$ with \(
m \;\geq\; C \cdot \dfrac{\ln n \cdot \ln\!\left(\frac{\ln n}{\varepsilon \delta}\right)}{\varepsilon^2}\)
for an absolute constant $C$. Then there exists an estimator using only
\[
S(\tilde{X}) = \frac{1}{m} \sum_{j=1}^m \max_{i \leq n} \langle w_j, \tilde{x}_i \rangle
\]
that outputs $\hat{k}$ satisfying
\(k^\star \leq \hat{k} \leq (1 + \varepsilon) k^\star,\)
with probability at least $1 - \delta$.
\end{theorem}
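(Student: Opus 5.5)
The estimator compares $S(\tilde X)$ against a known reference curve. Let $\mu(k) := \mathbb{E}[\max_{r\le k} Z_r]$ for $Z_r \iid \mathcal{N}(0,1)$. This function is explicitly computable, strictly increasing in $k$, and behaves like $\sqrt{2\ln k}$. Restrict attention to the geometric grid $k_t = \lceil (1+\varepsilon/4)^t\rceil$, $t=0,1,\dots,T$, with $T = O(\ln n/\varepsilon)$ chosen so that $k_T \ge n$. Since $k^\star \le n$, the true count lies in some cell $[k_t,k_{t+1}]$. The estimator outputs $\hat k$ equal to the appropriate grid point or the largest $k$ consistent with $S$ after an explicit additive slack. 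Concretely, I would define $\hat k := \min\{k : \mu(k) \ge S(\tilde X) - \tau\}$ rounded to a suitable grid point, where $\tau$ is the total error budget, and argue it is sandwiched correctly. Working with the $(1+\varepsilon/4)$-grid costs only constants.

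\textbf{Step 1: deterministic error of the mean.} Write $\bar S := \mathbb{E}[S(\tilde X)] = \mathbb{E}[\max_i \langle w,\tilde x_i\rangle]$. By Lemma~\ref{lem:perturb}, $|\bar S - \mathbb{E}\max_{r\le k^\star} G_r| \le \sqrt{2\eta\ln n} \le \sqrt{2c_\eta}\,\varepsilon/\ln n$. By Lemma~\ref{lem:slepian}, $|\mathbb{E}\max_r G_r - \mu(k^\star)| \le (1-\sqrt{1-\rho})\mu(k^\star) \le \rho\,\mu(k^\star) \lesssim c_\rho \varepsilon\sqrt{\ln n}/\ln n$. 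Hence $|\bar S - \mu(k^\star)| \le \tau_0 := c'(\sqrt{c_\eta}+c_\rho)\,\varepsilon/\sqrt{\ln n}$.

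\textbf{Step 2: gap.} Apply the i.i.d.\ case of Lemma~\ref{lem:gap-iid} (take $\rho=0$), or directly the asymptotics of $\mu$. This gives $\mu((1+\varepsilon/4)k)-\mu(k) \ge c_0\varepsilon/\sqrt{\ln k} \ge c_0\varepsilon/\sqrt{\ln n}$ for all $k\le n$. Choosing $c_\rho, c_\eta$ small makes $\tau_0 \le c_0\varepsilon/(8\sqrt{\ln n})$.

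\textbf{Step 3: concentration.} For each $j$, the map $w \mapsto \max_i\langle w,\tilde x_i\rangle$ is $1$-Lipschitz because the $\tilde x_i$ are unit vectors. Gaussian concentration therefore makes each $M_j$ sub-Gaussian with variance proxy $1$. Hoeffding for the average gives $|S-\bar S| \le \sqrt{2\ln(2/\delta')/m}$ with probability $1-\delta'$. Requiring this to be at most $c_0\varepsilon/(8\sqrt{\ln n})$ needs $m \gtrsim \ln n\,\ln(1/\delta')/\varepsilon^2$. The $\ln(\ln n/(\varepsilon\delta))$ factor in the hypothesis allows $\delta' = \delta/T$ with $T=O(\ln n/\varepsilon)$, which supports a union bound if the estimator is phrased as $T$ threshold tests between consecutive grid points. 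A single deviation event already suffices, so this is slack.

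\textbf{Step 4: sandwiching.} On the good event, $|S-\mu(k^\star)| \le c_0\varepsilon/(4\sqrt{\ln n})$, i.e.\ less than half the gap between $\mu$ at grid points a factor $(1+\varepsilon/4)$ apart. Monotonicity of $\mu$ then pins $k^\star$ to within two consecutive grid cells. The final step is to set the output to the upper end of the located range, so that $k^\star\le\hat k\le(1+\varepsilon/4)^2k^\star \le (1+\varepsilon)k^\star$.

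\textbf{Main obstacle.} The delicate point is Step 1's correlation error. Slepian yields a multiplicative error $\rho\,\mu(k)\approx\rho\sqrt{2\ln k}$, and this must be beaten by a gap of order $\varepsilon/\sqrt{\ln k}$, which forces $\rho \lesssim \varepsilon/\ln n$, exactly as assumed. I also need a uniform lower bound on the i.i.d.\ gap for small $k$ (down to $k=2$, where $\lceil(1+\varepsilon)k\rceil$ may equal $k$). To handle this, I would treat small $k$ separately using integer steps, since $\mu(k+1)-\mu(k)$ is bounded below by an absolute constant there. I would also handle the rounding of non-integer $(1+\varepsilon)k$ carefully.
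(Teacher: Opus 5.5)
Your proposal is correct in substance and follows the paper's route. You bound $|\mathbb{E}S-\mu(k^\star)|$ via the perturbation lemma plus Slepian, invoke the i.i.d.\ gap lemma, use Lipschitz concentration of $S$, and finish with a threshold rule. The differences are in your favor:
\begin{itemize}
\item You turn the $\sqrt{1\pm\rho}$ factors into an additive error $\rho\,\mu(k^\star)\le\rho\sqrt{2\ln n}$, instead of building $\rho$- and $\eta$-dependent thresholds $U(t),L(t)$.
\item You note correctly that one deviation event for the scalar $S$ controls all tests at once, so the paper's union bound over $O(\ln n/\varepsilon)$ tests is unnecessary.
\item Your $(1+\varepsilon/4)$ grid is exactly what absorbs the two-cell localization. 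The paper's Step~5 glosses over this: it establishes only $t_{r-1}<k^\star\le t_{r+1}$ but then uses $t_r\le k^\star$.
\end{itemize}
There is also a minor slip in Step~1: $\sqrt{2\eta\ln n}\le\sqrt{2c_\eta}\,\varepsilon/\sqrt{\ln n}$, not $/\ln n$. Your $\tau_0$ is nonetheless right.

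The one step that needs repair is the small-$k$ end of Step~4, and your diagnosis of it is off.
\begin{itemize}
\item The worry that $\lceil(1+\varepsilon)k\rceil$ may equal $k$ is unfounded. Since $(1+\varepsilon)k>k$, we get $\lceil(1+\varepsilon)k\rceil\ge k+1$, so the gap lemma holds verbatim for every $k\ge 2$.
\item The actual problem is the grid. The values $\lceil(1+\varepsilon/4)^t\rceil$ repeat. For $k<4/\varepsilon$, distinct grid points must be at least one integer apart, a ratio larger than $1+\varepsilon/4$. So ``locate two cells and output the upper end'' can return $k^\star+1$, which exceeds $(1+\varepsilon)k^\star$ whenever $k^\star<1/\varepsilon$.
\item Integer steps combined with ``output the upper end'' do not remove this.
\item Your justification is also wrong: $\mu(k+1)-\mu(k)\asymp 1/(k\sqrt{\ln k})$ is not bounded below by an absolute constant on that range. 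What holds, and suffices, is $\mu(k+1)-\mu(k)\gtrsim\varepsilon/\sqrt{\ln k}$ for $k\le 4/\varepsilon$, which is just the gap lemma with $\varepsilon/4$.
\end{itemize}

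The clean fix is to drop the grid and output $\hat k:=\max\{k\le n:\mu(k)\le S+\tau\}$. Work on your good event, where $|S-\mu(k^\star)|\le\tau$ and $2\tau<\mu(\lceil(1+\varepsilon/4)k^\star\rceil)-\mu(k^\star)$.
\begin{itemize}
\item Since $\mu(k^\star)\le S+\tau$, $k^\star$ is feasible, so $\hat k\ge k^\star$.
\item Every $k\ge\lceil(1+\varepsilon/4)k^\star\rceil$ satisfies $\mu(k)>\mu(k^\star)+2\tau\ge S+\tau$, so $\hat k<(1+\varepsilon/4)k^\star$.
\end{itemize}
This uses the gap only at $k=k^\star\ge 2$ and involves no rounding.
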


\begin{remark}[Estimator]
A geometric search over the grid $t_r = (1 + \varepsilon)^r$ suffices: each test compares $S$ to a precomputed threshold separating $k \leq t$ from $k \geq (1 + \varepsilon)t$. Importantly, these thresholds are data independent and are derived from the expected maxima of $t$ and $(1 + \varepsilon)t$ i.i.d.\ Gaussians.
\end{remark}

The proof is in Appendix~\ref{app:thm1}.

\subsection{From random to fixed weights, and learning the readout}

Theorem~\ref{thm:main} provides guarantees over a fresh draw of projections for every clusterable input. In practice,
however, the sketch is built once and deployed across many sequences
from some unknown distribution $\mathcal D$. Corollary~\ref{cor:fixed-sketch} shows that a single fixed sketch suffices for most inputs from $\mathcal{D}$.

\begin{corollary}[Fixed sketch for a distribution]
\label{cor:fixed-sketch}
Under the conditions of Theorem~\ref{thm:main}, for any distribution $\mathcal D$ over $(\eta,\rho)-$clusterable sequences there exist fixed $w_1, \dots, w_m \in \mathbb{R}^d$ such that MaxSketch achieves $(1 + \varepsilon)$ multiplicative approximation with probability at least $1-\delta$ over a fresh draw $X\sim \mathcal{D}$.
\end{corollary}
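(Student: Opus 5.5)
The argument is the standard probabilistic-method averaging: Theorem~\ref{thm:main} gives a per-input guarantee over the random draw of the projections, and we swap the order of the two random choices. Let $W = (w_1,\dots,w_m)$ with $w_j \stackrel{\text{i.i.d.}}{\sim} \mathcal{N}(0,I_d)$, independent of $X \sim \mathcal D$. For each $(\eta,\rho)$-clusterable sequence $X$ in the support of $\mathcal D$, let $\hat{k}(W,X)$ be the output of the estimator of Theorem~\ref{thm:main} applied to $S(X)$ computed with projections $W$. Define the failure indicator
\[
F(W,X) := \mathbbm{1}\bigl\{\hat{k}(W,X) < k^\star(X) \ \text{or}\ \hat{k}(W,X) > (1+\varepsilon)k^\star(X)\bigr\}.
\]
Since the thresholds of the estimator are data independent, $\hat k(W,X)$ depends on $W$ only through $S(X)$. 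Hence $F$ is a jointly measurable function of $(W,X)$, being a composition of continuous maps with threshold comparisons; this is a routine check I would record explicitly.

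The key step is the following. Theorem~\ref{thm:main} states that for every fixed clusterable $X$ satisfying its hypotheses (the same $n$, $\varepsilon$, $\rho$, $\eta$ and $m$ for all sequences in the support of $\mathcal D$),
\[
\mathbb{E}_W[F(W,X)] \le \delta .
\]
Integrating over $X \sim \mathcal D$ and applying Fubini--Tonelli, which is valid because $F \ge 0$ is measurable on the product space, gives
\[
\mathbb{E}_W\bigl[\mathbb{E}_{X\sim\mathcal D}[F(W,X)]\bigr] = \mathbb{E}_{X}\bigl[\mathbb{E}_W[F(W,X)]\bigr] \le \delta .
\]
A nonnegative random variable cannot exceed its mean almost surely. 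Therefore there is a realization $W^\ast = (w_1^\ast,\dots,w_m^\ast)$, in fact one from a set of positive Gaussian measure, such that $\Pr_{X\sim\mathcal D}[F(W^\ast,X)=1] \le \delta$. These $w_j^\ast$ are the claimed fixed weights.

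The only real obstacle is bookkeeping rather than mathematics. The uniformity requirement is that a single choice of $m$ and a single threshold family must work for every $X$ in the support. The threshold family is fixed given $\varepsilon$, and $m$ is given by the theorem in terms of $n$, $\varepsilon$ and $\delta$. If the sequences in $\mathcal D$ have varying lengths, I would take $n$ to be an upper bound on the lengths (or restrict $\mathcal D$ to a fixed $n$), since the hypotheses of Theorem~\ref{thm:main} only become weaker as $n$ decreases. If a stronger statement is desired, Markov's inequality applied to $\mathbb{E}_X[F(W,X)]$ shows that a random $W$ succeeds for at least a $1-\delta/\delta'$ fraction of inputs with probability at least $1-\delta'$, so most draws of $W$ are good; this is not needed for the existence statement.
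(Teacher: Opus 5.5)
Your proposal is correct and is the same argument as the paper's: define the failure indicator and $g(W)=\mathbb{E}_X[F(X,W)]$, swap the order of expectation to get $\mathbb{E}_W[g(W)]\le\delta$ from the per-input guarantee of Theorem~\ref{thm:main}, and conclude by the probabilistic method that some fixed $W^\star$ has $g(W^\star)\le\delta$. Your extra remarks on measurability and Fubini, and on varying stream lengths, are bookkeeping the paper leaves implicit. One small correction: the thresholds $\theta_r$ depend on $\rho$, $\eta$ and $n$, not only on $\varepsilon$, which is harmless because these parameters are fixed across the class.
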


Once the projections are fixed, the dependence of the model on $X$ collapses to the scalar statistic $S(X)$, which is monotone in $k$ in expectation. Learning therefore reduces to fitting a one-dimensional monotone readout which is sample-efficient and well-conditioned.

\begin{theorem}[Learnability]\label{thm:learn}
Under the conditions of Theorem~\ref{thm:main} there exists a polynomial-time learning algorithm which, given
$N=\operatorname{poly}(\ln n,1/\varepsilon,\ln(1/\delta))$ i.i.d. training
samples $(X_1,k_1),\dots,(X_N,k_N)\sim\mathcal{D}$, outputs a monotone
readout function $f:\mathbb{R}\to\mathbb{R}$ such that, for a fresh test
sample $(X,k)\sim\mathcal{D}$,\(k \;\le\; f(S(X)) \;\le\; (1+\varepsilon)k
\)
with probability at least $1-\delta$.
\end{theorem}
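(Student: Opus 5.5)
The plan is to reduce learnability to estimating a small number of one-dimensional thresholds from labeled data. The readout is a monotone step function of $S$ on the geometric grid $t_r=(1+\varepsilon)^r$, $r=0,\dots,R$ with $R=O(\ln n/\varepsilon)$. Since $k\le n$, only $R+1$ grid points matter.

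First, by Corollary~\ref{cor:fixed-sketch} (or directly Theorem~\ref{thm:main} with a union bound), fix projections $w_1,\dots,w_m$. For these projections, with probability at least $1-\delta/2$ over $X\sim\mathcal D$, the value $S(X)$ lies in a known window determined by $k(X)$. Concretely, the proof of Theorem~\ref{thm:main} provides data-independent intervals $I_k=[a_k,b_k]$ such that $S(X)\in I_{k}$ with high probability. These intervals come from Lemma~\ref{lem:slepian}, Lemma~\ref{lem:perturb} and concentration. Moreover, Lemma~\ref{lem:gap-iid} shows $b_k<a_{(1+\varepsilon)k}$, with a gap of order $\varepsilon/\sqrt{\ln n}$.

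The learner does not know these intervals, so it estimates one threshold $\theta_r$ per grid level. The labeled samples give exactly the needed information: $\theta_r$ should separate examples with $k\le t_r$ from those with $k\ge(1+\varepsilon)t_r$. I will treat each level as learning a one-dimensional threshold classifier on the pairs $(S(X_i),\mathbf 1[k_i\le t_r])$. This class has VC dimension $1$, and empirical risk minimization runs in polynomial time by sorting. By the separation property, some threshold has error at most $\delta/2$ on $\mathcal D$. With $N=O(\ln(R/\delta)/\delta')$ samples per level, a union bound over the $R+1$ levels then makes every learned $\hat\theta_r$ have error at most $\delta/(2(R+1))$. I would set $\delta'$ as a suitable fraction of $\delta/R$, so $N$ is polynomial in $\ln n$, $1/\varepsilon$ and $1/\delta$.

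The remaining requirement is $\ln(1/\delta)$ dependence rather than $1/\delta$. This is the main obstacle. I would address it by amplification: train on a constant target error, then use validation with $O(\ln(1/\delta))$ independent folds and take the median threshold. Alternatively, I would argue a realizable-with-margin setting in which the error rate scales with $\ln(1/\delta)/N$ and absorb the failure probability into the $1-\delta$ of Theorem~\ref{thm:main}.

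Finally, define $f(s)=(1+\varepsilon)t_{r^*(s)}$, where $r^*(s)$ is the smallest $r$ with $s\le\hat\theta_r$. This function is monotone after replacing the $\hat\theta_r$ by their running maximum, and isotonic fixing preserves the error bounds. On the good event, every test is correct for $S(X)$, and the search argument of Theorem~\ref{thm:main} gives $k\le f(S(X))\le(1+\varepsilon)k$. Reparametrizing $\varepsilon$ by a constant factor absorbs the grid slack.
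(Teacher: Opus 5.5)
Your skeleton matches the paper's: an oracle monotone step function on the geometric grid, one VC-dimension-one threshold per level, and a union bound over $O(\ln n/\varepsilon)$ levels. However, the step you flag yourself is a genuine gap, and neither proposed fix closes it.

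The probability in the statement is joint over the weights, the training sample and the fresh $X$. So what must be at most $\delta$ is the \emph{expected population error} of the learned readout, and each level's threshold needs error $O(\delta/R)$.
\begin{itemize}
\item Median amplification over $O(\ln(1/\delta))$ folds only raises the confidence with which a given accuracy is reached. Thresholds trained to constant error have constant error, and so does their median.
\item The realizable rate $O(\ln(1/\delta)/N)$ must likewise be at most $\delta/R$, which forces $N=\Omega(R/\delta)$.
\item A margin in $S$-space does not help a learner that does not know where the margin lies: label-$1$ mass of order $1/N$ near the top of its window is, with constant probability, never sampled.
\item Folding the learner's confidence parameter into the $\delta$ of Theorem~\ref{thm:main} leaves this error level untouched.
\end{itemize}
The paper's proof has the same defect. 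It sets $\zeta=\Theta(\delta/T)$ with $N_0=O(\zeta^{-1}\ln(T/\delta))$ per breakpoint, which gives $N=O(T^2\delta^{-1}\ln(T/\delta))$. That is polynomial in $1/\delta$, despite the claimed bound.

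There is also a smaller error, which the paper likewise glosses over. The labels $\mathbf{1}[k_i\le t_r]$ are not realizable by a threshold when $\mathcal D$ puts mass on counts $t_r<k<(1+\varepsilon)t_r$, because their windows overlap $I_{t_r}$. So ``some threshold has error at most $\delta/2$'' is false in general. The fix is to train level $r$ only on examples with $k_i\le t_r$ or $k_i\ge(1+\varepsilon)t_r$; these are exactly the tests the search argument relies on.

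The way out is already in your first paragraph: the windows are data-independent, so the sentence ``the learner does not know these intervals'' is what creates the obstacle. The thresholds $\theta_r=\tfrac12\big(U(t_r)+L(t_{r+1})\big)$ of Theorem~\ref{thm:main} can be computed from $n$, $\varepsilon$ and expected maxima of i.i.d.\ Gaussians. Use $c_\rho\varepsilon/\ln n$ and $c_\eta\varepsilon^2/(\ln n)^2$ in place of $\rho,\eta$; this only widens the windows, and the gap bound used only these upper bounds. For any thresholds, $s\mapsto\min\{r:s\le\theta_r\}$ is nondecreasing, so that estimator is already a monotone readout and no running-maximum repair is needed. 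Averaging the per-sequence guarantee of Theorem~\ref{thm:main} over $X\sim\mathcal D$ (Fubini over $(w_1,\dots,w_m,X)$) then gives failure probability at most $\delta$ with no training data at all. Hence the statement holds for every $N$. If you insist on thresholds fitted purely from samples, your route gives only $N=\operatorname{poly}(\ln n,1/\varepsilon,1/\delta)$, which is also all the paper's argument actually establishes.
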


Proofs of Corollary~\ref{cor:fixed-sketch} and Theorem~\ref{thm:learn} are in Appendices~\ref{app:thm2}--\ref{app:thm3}.

Together, Theorem~\ref{thm:main} and Theorem~\ref{thm:learn} give a complete pipeline: a fixed Gaussian sketch (good with high probability per Theorem~\ref{thm:main}) followed by a monotone readout (efficiently fittable per Theorem~\ref{thm:learn}). The next section examines whether this signal remains useful when embeddings themselves are learned end-to-end under aggregate count supervision.

% ============================
\section{Experiments}\label{sec:experiments}
% ============================

Our experiments have three goals. First, on MNIST we ask whether an encoder and a MaxSketch readout can be trained jointly from count-only supervision, and whether the resulting model extrapolates to sequence lengths far longer than those seen during training -- a direct prediction of the theory once embeddings are clusterable. Second, on CIFAR-10 we ask the same question in a setting with substantially more intra-class variability, and we separate the contribution of the encoder from that of the aggregator by comparing end-to-end training against pretrained features. Third, we evaluate MaxSketch on real face streams using fixed pretrained embeddings, where the assumptions of our theory are only approximately satisfied, and we compare against the streaming algorithm of Zhang \cite{zhang2025robust}.

\subsection{End-to-end learning to count on MNIST and CIFAR-10}

We study the problem of learning to count the number of distinct latent objects in a sequence under aggregate supervision only. Each data point is a sequence $X = (x_1,\dots,x_n)$ of MNIST/CIFAR-10 images, and the only supervision is the number of distinct latent classes $k(X) = \vert\{z_1,\dots,z_n\}\vert$, appearing in $X$; the image labels $z_i \in \{1,\dots,10\}$ are never observed. The model consists of an encoder $\phi_{\theta}: \mathcal{X} \rightarrow \mathbb{R}^d$, applied independently to each image and an aggregator $f_\psi: (\mathbb{R}^d)^n \rightarrow \mathbb{R}.$ applied to the resulting set of embeddings, both trained jointly to predict $k(X)$.

Beyond MaxSketch, we include two natural baselines that share the same encoder but differ in $f_\psi$: a DeepSets aggregator \cite{zaheer2017deep}, which sum-pools embeddings, and an LSTM aggregator \cite{hochreiter1997long} that processes them sequentially. DeepSets is permutation-invariant but conflates frequency with distinctness; the LSTM is included as a non-set baseline.

\subsection{MNIST Results}
All three models reach near-perfect accuracy across sequence lengths $n = \{2, 5, 10, 20, 50, 100\}$ (Figure \ref{fig:mnist_count_generalization}), and $\pm 1$ accuracy is exactly 100\% throughout. This is consistent with the theory: MNIST embeddings have low intra-class variability and strong inter-class separation, so learning an encoder to satisfy the $(\eta,\rho)-$ clusterability assumption becomes substantially easier. In this regime, the aggregation mechanism plays a relatively minor role, and even sum- or sequence-based models can recover the correct count.

Rather than diminishing the contribution of MaxSketch, these results establish an important sanity check: when the clusterability assumption holds strongly, all reasonable aggregation methods succeed, and differences only emerge as the geometry becomes more challenging.

The more informative test is whether the learned model extrapolates beyond the count and length range seen at training. Table \ref{tab:generalization} reports MaxSketch trained at $n_{\text{train}} \in \{10, 20, 50\}$ and evaluated at $n_{\text{test}} \in \{150, 250, 500\}$. Models trained on shorter sequences exhibit some underestimation bias for very long sequences, but maintain near-perfect $\pm 1$ accuracy, indicating that the learned statistic still carries meaningful count information.

The learned encoder, evaluated independently by 1-NN classification on test images, attains $1.00$ accuracy regardless of $n_{\text{train}}$, so the failure at long sequences is not a representation collapse. Rather, it is a calibration failure of the regression head. This behaviour matches the theoretical picture: under clusterability, the MaxSketch statistic is monotone $k$, so the only learning that needs to happen is fitting a one-dimensional monotone readout, and the difficulty of that fit is determined by the training count range $n$.

\begin{table}[!hbt]
\caption{Generalization of MaxSketch models to unseen, longer sequence lengths on MNIST. Each model is trained on a specific sequence length $n_\text{train}$ and evaluated on sequences of lengths $n_\text{test}$ not seen during training. Numbers indicate exact count accuracy, with $\pm 1$ accuracy shown in parentheses.}
\label{tab:generalization}
\centering
\begin{tabular}{lccc}
\toprule
$n_\text{train}$ & $n_\text{test}=150$ & $n_\text{test}=250$ & $n_\text{test}=500$ \\
\midrule
10  & $0.37$ $(0.97)$ & $0.25$ $(0.9)$ & $0.06$ $(0.78)$ \\
20  & $0.70$ $(0.99)$ & $0.58$ $(0.98)$ & $0.37$ $(0.96)$ \\
50  & $0.99$ $(1.0)$ & $0.98$ $(1.0)$ & $0.96$ $(1.0)$ \\
\bottomrule
\end{tabular}
\end{table}

\subsection{CIFAR-10 Results}  

CIFAR-10 presents a significantly harder setting, with higher intra-class variability and less pronounced cluster structure and we expect end-to-end training from count supervision alone provides a much weaker learning signal in the representation space. To test this, we compare three regimes that differ only in how the encoder is obtained: \textbf{Sup. (Frozen)}, where the encoder is pretrained with class labels and frozen during count training; \textbf{Sup. (FT)}, where it is pretrained and then fine-tuned; and \textbf{E2E}, where it is trained from scratch using only count supervision. The aggregator is MaxSketch in all three cases. Table \ref{tab:cifar_frozen} shows that pretrained features already support reliable counting and that fine-tuning gives only modest additional gains. End-to-end training from count labels alone is markedly worse, particularly as the sequence length increases. The $\pm 1$ accuracy remains substantial, indicating that even the end-to-end model captures the correct ordering of counts; what it lacks is the cluster structure needed for exact recovery.

These two experiments together draw a clean line: when the encoder produces clusterable embeddings MaxSketch performs as the theory predicts, and the regression head is the only piece that needs to be calibrated for the count range of interest. When the encoder must itself be induced from count supervision in a high-variability regime, performance is dominated by the difficulty of learning clusterable representations from count on information, which lies outside the scope of our analysis.

\begin{table}[!hbt]
\caption{Performance of MaxSketch on CIFAR-10 across sequence lengths, comparing training regimes.}
\label{tab:cifar_frozen}
\centering
\begin{tabular}{lccc}
\toprule
Model & $n_\text{test}=10$ & $n_\text{test}=20$ & $n_\text{test}=50$ \\
\midrule
Sup. (Frozen)  & $0.70$ $(0.99)$ & $0.69$ $(0.98)$ & $0.7$ $(0.97)$ \\
Sup. (FT)      & $0.71$ $(0.97)$ & $0.72$ $(0.96)$ & $0.78$ $(0.99)$ \\
E2E            & $0.55$ $(0.85)$ & $0.62$ $(0.83)$ & $0.63$ $(0.92)$ \\
\bottomrule
\end{tabular}
\end{table}

\subsection{Counting Unique Faces in Real-World Image Sequences}\label{subsection:counting faces}

We now turn to a setting where the embedding is fixed and supplied externally, so that we can isolate MaxSketch from the question of representation learning entirely. We use three face benchmarks: \emph{Labeled Faces in the Wild (LFW)} \cite{huang2008labeled}, \emph{CelebA} \cite{liu2015deep}, and the \emph{CMU Multi-PIE} dataset \cite{heo2008face}, all restricted to identities with at least two images so that each identity can appear multiple times in a stream. The three datasets are merged into a single identity pool, which is split uniformly at random into a calibration half and an evaluation half of $5{,}103$ identities each. For each image we extract a face embedding with the pretrained DeepFace VGG-Face model \cite{serengil2024benchmark}; embeddings are L2-normalized and held fixed throughout, so the encoder is never adapted to the counting task.

% To simulate heterogeneous data sources, we merged the three datasets into a single pool of identities. The resulting identity set was randomly split into two equal parts: $50\%$ for calibration and $50\%$ for evaluation, yielding 5{,}103 identities in each split. For each image, we extracted a face embedding using the pretrained DeepFace VGG-Face model \cite{serengil2024lightface}. These embeddings were treated as fixed feature vectors and were not further fine-tuned for the counting task.

Each evaluation stream is built by drawing a target distinct count $k \in \{100, 200, \dots, 5100\}$ from the held-out identities and then sampling $n = 20{,}000$ images of those identities, in which the same person typically appears many times. This is the regime motivating our setting in the first place: a video feed or a large photo collection in which repeated observations are the rule. We use $m = 4{,}096$ Gaussian projections and a one-dimensional monotone readout fit on the calibration half by isotonic regression; the readout is fit once and applied unchanged to every evaluation stream. Because $k$ ranges into the thousands, exact recovery is not the right metric, and we instead report the $\pm 50,\pm100,\pm150$ accuracy.

The most directly comparable prior work is the streaming algorithm of Zhang for distinct count estimation in general metric spaces. The algorithm is parameterized by a near-duplicate distance threshold $\alpha$ (Definition 1.1 of \cite{zhang2025robust}), and its correctness guarantee assumes the dataset is \textit{well-shaped}: every two points belonging to different underlying objects must be at distance strictly greater than $2\alpha$, while every two points belonging to the same object must be at distance at most $\alpha$. Real face embeddings do not satisfy this condition at any $\alpha$ as intra-identity and inter-identity distance distributions overlap. One should therefore expect the algorithm to be sensitive to the choice of $\alpha$ and to errors of both signs. We implement Algorithm 1 of \cite{zhang2025robust} faithfully and select $\alpha = 0.40$ as the value that minimizes error on the calibration half (the same split used to fit our readout). Across the full evaluation range $k \in \{100, 200, \dots, 5100\}$, Zhang's algorithm fails to achieve $\pm 150$ accuracy on $43$ of the $51$ evaluated values of $k$.

By contrast, due to MaxSketch’s nature as a \say{statistical embedding} mild violations of $(\eta,\rho)-$clusterability shift the expectation of $S(X)$ continuously, and the calibrated readout absorbs these biases. Empirically (Figure \ref{fig:MaxSketch}), MaxSketch maintains at least $60\%$ accuracy at the strictest $\pm 50$ tolerance across the full $k$ range using a single readout, and the $\pm 100$ and $\pm 150$ curves remain near $100\%$ throughout. Overall, this demonstrates that MaxSketch can extract coarse multiplicity information directly from pretrained embedding spaces without requiring clustering or identity supervision at inference time. Such approximate counts may be useful in large-scale monitoring and analytics scenarios where rough population estimates are sufficient and explicit identity resolution is unnecessary.

\begin{figure}[!hbt]
    \centering
    \includegraphics[width=1\linewidth]{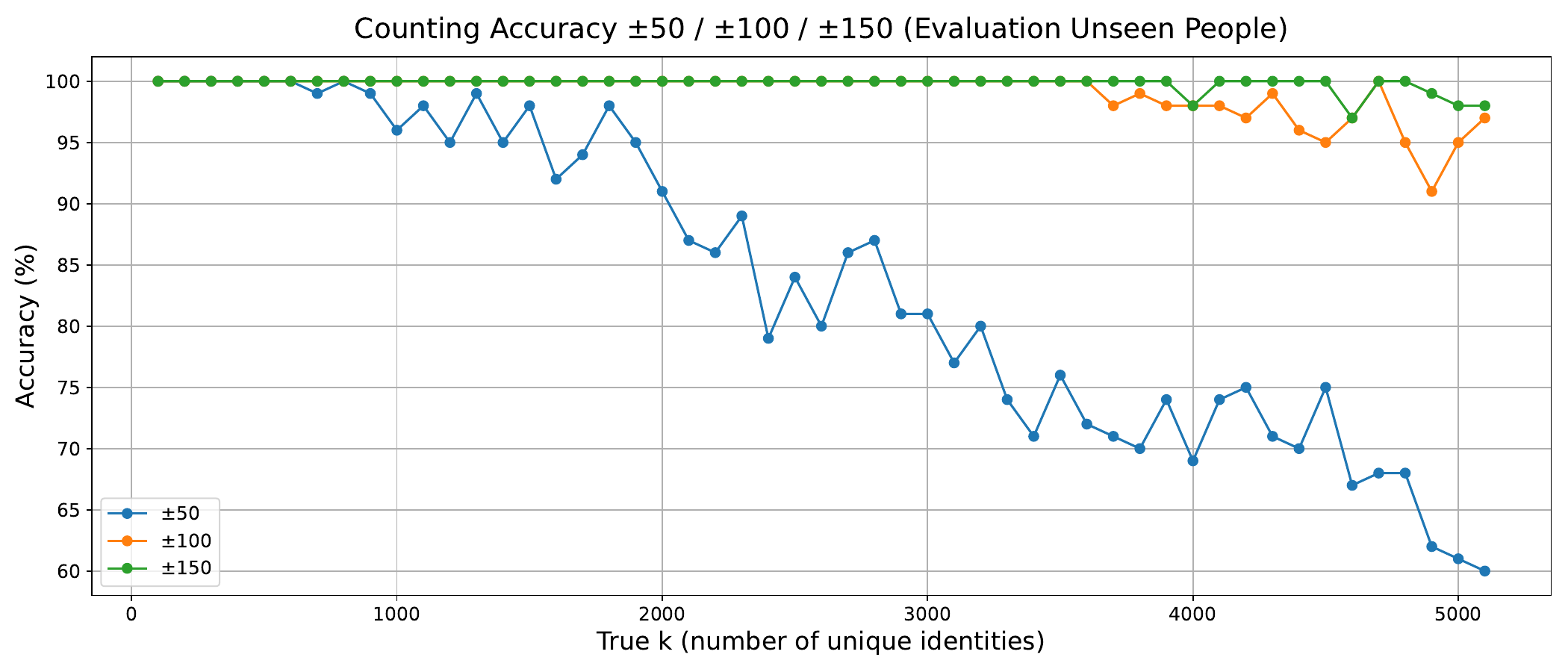}
    \caption{MaxSketch counting accuracy on the held-out face identity split, reported at tolerances of $\pm 50$, $\pm 100$, and $\pm 150$ across $k \in \{100, 200, \dots, 5100\}$.}
    \label{fig:MaxSketch}
\end{figure}

\section{Limitations}
Our theoretical guarantees rely on an explicit geometric separation assumption: embeddings must be 
$(\eta,\rho)-$clusterable, with small within-cluster variance and sufficiently weak correlations across cluster centers. When this assumption is violated, e.g., in highly entangled visual data with substantial intra-class variability or semantically overlapping categories, the gap in expected maxima necessarily diminishes, making reliable separation of cluster counts information-theoretically difficult. The CIFAR-10 and LFW experiments illustrate this regime: while MaxSketch continues to capture coarse multiplicity information, exact counting degrades as embeddings deviate from well-separated cluster structure. 

Additionally, understanding when count supervision alone can provably induce clusterable embedding geometry, rather than merely exploit it when present, remains a central open problem.

\section{Conclusion}

We introduced MaxSketch, a max-linear streaming sketch built from random Gaussian projections, and showed that it provably estimates the number of distinct latent objects in a stream under a natural clusterability assumption. The central insight is that geometric structure common in modern learned representations, that is tight within-cluster alignment and weak cross-cluster correlation, is precisely the structure needed to make extreme-value statistics informative about cardinality and surpass the known lower bound. Under this assumption, MaxSketch achieves $(1 + \varepsilon)-$multiplicative accuracy using only $\tilde{O}(\log n/\varepsilon^2)$ memory.

On the practical side, MaxSketch achieves high accuracy when embeddings are well-clustered, as in MNIST or pretrained CIFAR features, but its exact accuracy decreases on datasets with substantial intra-class variability, such as end-to-end CIFAR. On real face streams, where clusterability holds only approximately, the calibrated readout absorbs residual bias from mild violations, yielding robust approximate counts across a wide identity range.

Two open problems would sharpen the theoretical picture. First, no lower bound is currently known under our clusterability assumption; establishing one would clarify whether MaxSketch's memory is optimal in the structured regime. Second, our guarantees assume clusterability is already present in the encoder; a theory of when aggregate count supervision alone can provably induce this geometry would bring the theoretical and empirical settings into alignment, replacing a geometric assumption on the encoder with a condition on the learning objective alone.

\newpage

% ============================
% References (placeholder)
% ============================

\bibliography{references}
\bibliographystyle{plain}

\newpage

\appendix

% ============================================================
% APPENDIX: COMPLETE PROOFS FOR THEOREMS 1--3
% Drop this into your paper after \appendix.
% ============================================================

\section{Appendix: Extended Related Work}\label{app:related}

\subsection{Classical Sketching and Cardinality Estimation}
The problem of estimating the number of distinct elements in a stream has been extensively studied in the data management and streaming algorithms literature. Flajolet and Martin \cite{flajolet1985probabilistic} pioneered probabilistic counting via random bit patterns, achieving $O(\log^2 n)$ space for $O(1/\varepsilon^2)$ multiplicative approximation. Durand and Flajolet \cite{durand2003loglog} improved this via the LogLog algorithm, reducing space to $O(\log n)$ registers. The authors in \cite{heule2013hyperloglog} introduced HyperLogLog, which achieves $O(\log^2 n)$ space with $(1+\varepsilon)$ multiplicative approximation and remains the gold standard in classical sketching, widely used in databases (e.g., Redis, Presto) and big data systems.

These classical methods share two defining features: (i) they assume deterministic, identical representations for repeated elements (via cryptographic hashing), and (ii) they use hand-designed random projections and analytical estimators derived from probabilistic analysis. While highly effective, this paradigm raises a natural question: can modern machine learning discover task-specific sketching statistics end-to-end?

% \subsection{Weak Supervision and Aggregate Labels}
% Weak supervision encompasses label noise \cite{goldberger2016training}, incomplete labels \cite{joachims2003learning}, and crowdsourced labels \cite{raykar2010learning}. Our setting—learning from sequence-level counts without instance-level identity labels—is a specific instance of weak supervision but with distinct structure: the target is a global statistical property (cardinality) rather than a per-instance label.

\subsection{Permutation-Invariant Models and Set Functions}
Recent work has clarified the theory of permutation-invariant neural networks operating on sets. The authors in \cite{zaheer2017deep} established that all permutation-invariant functions on finite sets have the form $\rho(\sum \phi(x_i))$, providing a complete characterization. This DeepSets framework has become foundational for learning on variable-sized sets and inspired numerous extensions.

\subsection{Self-Supervised Learning and Representation Quality}
Our method's success depends critically on good embeddings satisfying clusterability (Definition 2.1). Recent breakthroughs in self-supervised learning have produced embeddings where similar items cluster and dissimilar items separate, directly supporting our geometric assumptions.

\textbf{SimCLR} \cite{chen2020simple}: Uses contrastive loss to learn representations where augmentations of the same image cluster, and different images separate. Achieves clustering structure aligned with our geometric model, producing embeddings with strong within-class alignment and between-class separation.

\textbf{MoCo} \cite{he2020momentum}: Momentum contrast provides an alternative contrastive framework, showing robust clustering across diverse datasets without requiring large batch sizes. Demonstrates the robustness of clustering assumptions across different SSL architectures.

\textbf{BYOL} \cite{grill2020bootstrap}: Learns clustering without explicit negative pairs, showing that clustering emerges naturally even without contrastive objectives. Further validates that learned representations naturally satisfy clusterability.

\textbf{Neural Collapse Phenomenon} \cite{papyan2020prevalence,zhou2022optimization}: Recent work formalizes why deep classifiers learn near-orthogonal class representations in the terminal phase of training. \cite{zhu2021geometric} provide further geometric characterization, confirming that learned embeddings have the low-correlation structure we assume.

\subsection{Counting Objects in Images}

\cite{gao2020cnn} is a survey about direct count regression in images (e.g., counting people in a crowd). These methods differ fundamentally from our work: they predict total counts in a single image, not distinct elements in a stream. No theoretical guarantees or handling of weak supervision.

\subsection{Permutation-invariant models and max aggregation.}
In deep learning, permutation-invariant models such as DeepSets aggregate unordered inputs via sum, mean, or max pooling and are universal approximators of set functions under sufficient capacity \cite{zaheer2017deep}. However, the choice of aggregator has a decisive impact on what statistics can be recovered: sum and mean aggregates conflate frequency and distinctness, and fixed-dimensional sum-based representations impose fundamental limitations on the expressivity of permutation-invariant set functions under continuous mappings \cite{pellegrini2020learning, soelch2019deep, wagstaff2019limitations}. Recent work has explored learnable aggregation mechanisms, showing that approximating complex set functions often requires aggregators beyond fixed sum or max and can incur poor learnability or require latent dimensions scaling with set size \cite{kortvelesy2023generalised, pellegrini2020learning}. In contrast, our work provides finite-sample, multiplicative error guarantees for a specific max-linear sketch under a transparent geometric clusterability assumption, explaining why max projections can isolate extreme-value statistics that correlate monotonically with distinct counts.

\section{Proofs of Theorems~1--3}\label{app:main-proofs}

\subsection{Preliminaries and technical lemmas}\label{app:prelims}

Throughout, for an integer $k\ge 1$, define
\[
G_k \;:=\; \langle w, x_k \rangle,
\qquad w \sim \mathcal{N}(0,1), \qquad |\langle x_i, x_j\rangle| \leq \rho
\]

\paragraph{Max sketch statistic.}
Given a sequence $\tilde{X}=\langle \tilde{x}_1,\dots,\tilde{x}_n\rangle\subset \mathbb{S}^{d-1}$ and weights
$w_1,\dots,w_m\in\R^d$, define
\[
M_j(\tilde{X}) \;:=\; \max_{1\le i\le n}\ip{w_j}{\tilde{x}_i},
\qquad
S(\tilde{X}) \;:=\; \frac{1}{m}\sum_{j=1}^m M_j(\tilde{X}).
\]

% ------------------------------------------------------------
\begin{lemma}[Norm bound for Gaussians]\label{lem:gauss-norm}
If $w\sim \mathcal{N}(0,I_d)$ then $\E\|w\|\le \sqrt{d}$ and $\Pr(\|w\| \ge \sqrt{d}+t)\le e^{-t^2/2}$.
\end{lemma}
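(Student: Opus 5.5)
The first claim will follow from Jensen's inequality, and the second from Gaussian concentration of Lipschitz functions. Both are standard facts about $w\sim\mathcal{N}(0,I_d)$.

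For the expectation bound, the square root is concave, so Jensen gives
\[
\E\|w\| = \E\sqrt{\|w\|^2} \le \sqrt{\E\|w\|^2} = \sqrt{\textstyle\sum_{i=1}^d \E w_i^2} = \sqrt{d}.
\]

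For the tail bound, I would use the Gaussian concentration inequality for Lipschitz functions (Tsirelson--Ibragimov--Sudakov; see \cite{boucheron2003concentration, ledoux1991probability}). It says that if $f:\R^d\to\R$ is $1$-Lipschitz with respect to the Euclidean norm, then
\[
\Pr\big(f(w)\ge \E f(w)+t\big)\le e^{-t^2/2} \quad \text{for all } t\ge 0.
\]
The map $f(w)=\|w\|$ is $1$-Lipschitz by the reverse triangle inequality, $\big|\|u\|-\|v\|\big|\le\|u-v\|$. Since $\E\|w\|\le\sqrt d$ by the first part, the event $\{\|w\|\ge\sqrt d+t\}$ is contained in $\{\|w\|\ge \E\|w\|+t\}$. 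Applying the concentration inequality to the larger event gives $\Pr(\|w\|\ge\sqrt d+t)\le e^{-t^2/2}$.

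There is no real obstacle here. The only point needing care is to state the concentration inequality with the sharp constant $1/2$ in the exponent, which is the Gaussian isoperimetric / Cirel'son form rather than the weaker constants that come out of Herbst-type log-Sobolev arguments. If one prefers to avoid citing the sharp form, an alternative is to use $\|w\|^2\sim\chi^2_d$ together with the Laurent--Massart bound $\Pr(\|w\|^2\ge d+2\sqrt{dx}+2x)\le e^{-x}$. Setting $x=t^2/2$ gives $d+2\sqrt{dx}+2x\le(\sqrt d+\sqrt{2x})^2=(\sqrt d+t)^2$, which yields the same conclusion.
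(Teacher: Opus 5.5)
Your proof is correct and follows the same route as the paper's: Jensen for the mean, Gaussian concentration of the $1$-Lipschitz map $w\mapsto\|w\|$ for the tail, and $\chi^2_d$ bounds as the alternative. You also usefully make explicit the step $\E\|w\|\le\sqrt d$ that turns a deviation-from-the-mean bound into the stated $\sqrt d+t$ form (for $t\ge 0$), which the paper leaves implicit.

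One minor correction to your aside: Herbst's argument applied to the Gaussian log-Sobolev inequality already gives $\E e^{\lambda(f-\E f)}\le e^{\lambda^2/2}$, and hence the sharp $e^{-t^2/2}$. The weaker constants come from other routes, such as the Maurey--Pisier interpolation argument.
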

\begin{proof}
The tail bound is standard for $\chi^2_d$ concentration (e.g.\ by Gaussian concentration for the 1-Lipschitz function $w\mapsto \|w\|$).
Integrating the tail (or Jensen) yields $\E\|w\|\le \sqrt{\E\|w\|^2}=\sqrt{d}$.
\end{proof}

\begin{lemma}[Perturbation bound]\label{lem:perturb}
Let $\tilde{X} = \langle \tilde{x}_1, \dots, \tilde{x}_n \rangle \subset \mathbb{S}^{d-1}$ be
$(\eta, \rho)$-clusterable with centers $x^{(1)}, \dots, x^{(k)} \in \mathbb{S}^{d-1}$
and assignment $r : [n] \to [k]$, so that
$\langle \tilde{x}_i, x^{(r(i))} \rangle \geq 1 - \eta/2$ for all $i \in [n]$.
Let $w \sim \mathcal{N}(0, I_d)$. Then
\[
\Big|\,\mathbb{E}\!\left[\max_{i \leq n} \langle w, \tilde{x}_i \rangle\right]
-
\mathbb{E}\!\left[\max_{r \leq k} \langle w, x^{(r)} \rangle\right]\Big|
\;\leq\; \sqrt{2\eta \ln n}.
\]
\end{lemma}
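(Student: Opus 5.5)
The plan is to compare the two maxima pointwise in $w$ through a decomposition into a center term plus a small Gaussian error, and only then take expectations. For each $i\in[n]$ write $u_i := \tilde{x}_i - x^{(r(i))}$. Since both vectors are unit, $\|u_i\|^2 = 2 - 2\langle \tilde{x}_i, x^{(r(i))}\rangle \le \eta$ by the within-cluster alignment condition of Definition~\ref{def:clusterable}. Set $E_i := \langle w, u_i\rangle$. Each $E_i$ is a centered Gaussian with variance $\|u_i\|^2\le \eta$, and we have the exact identity $\langle w,\tilde{x}_i\rangle = G_{r(i)} + E_i$, where $G_r=\langle w,x^{(r)}\rangle$.

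\emph{Upper direction.} Pointwise,
\[
\max_{i\le n}\langle w,\tilde{x}_i\rangle \le \max_{i\le n} G_{r(i)} + \max_{i\le n} E_i \le \max_{r\le k} G_r + \max_{i\le n} E_i .
\]
\emph{Lower direction.} I will use that every center is actually used by the assignment, i.e.\ $r$ is surjective. This is implicit in the count being $k$, since an unused center is not a ``distinct object of the stream''. I will state it explicitly, or equivalently restrict to the centers in the image of $r$. For $r^\ast=\arg\max_r G_r$, pick any $i^\ast$ with $r(i^\ast)=r^\ast$. Then
\[
\max_r G_r = \langle w,\tilde{x}_{i^\ast}\rangle - E_{i^\ast} \le \max_i \langle w,\tilde{x}_i\rangle + \max_i(-E_i).
\]
Taking expectations of both inequalities gives
\[
\bigl|\E\max_i\langle w,\tilde{x}_i\rangle - \E\max_r G_r\bigr| \le \max\Bigl(\E\max_i E_i,\ \E\max_i(-E_i)\Bigr).
\]
Both maxima have finite expectation, so the subtraction is legitimate.

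It remains to bound the expected maximum of $n$ (possibly correlated) centered Gaussians with variances at most $\eta$; the family $-E_i$ has the same law, so one bound covers both terms. I will use the standard moment-generating-function argument. For $\lambda>0$, Jensen's inequality gives
\[
e^{\lambda\E\max_i E_i}\le \E e^{\lambda \max_i E_i}\le \sum_i \E e^{\lambda E_i}\le n e^{\lambda^2\eta/2},
\]
so $\E\max_i E_i \le \frac{\ln n}{\lambda}+\frac{\lambda\eta}{2}$. Optimizing at $\lambda=\sqrt{2\ln n/\eta}$ yields $\sqrt{2\eta\ln n}$. This step needs $n\ge 2$, and if $\eta=0$ the bound is trivial since then all $E_i\equiv 0$.

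The only genuinely delicate point is the lower direction. It requires each center to be witnessed by some observation; without surjectivity of $r$ the statement is false, since an extra unused center can raise $\E\max_r G_r$ arbitrarily relative to the data. So the main thing to get right is stating that convention cleanly. The rest is routine Gaussian maximal-inequality bookkeeping, and notably no independence or correlation structure among the $E_i$ is needed.
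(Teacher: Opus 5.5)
Your proof is correct and has the same skeleton as the paper's. Both write $\tilde x_i = x^{(r(i))} + v_i$ with $\|v_i\|^2\le\eta$, compare the two maxima pointwise in $w$, and finish with a Gaussian maximal inequality. You differ in how the pointwise comparison is closed.

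The paper uses the symmetric bound $|\max_i a_i-\max_i b_i|\le\max_i|a_i-b_i|$. That leaves it with $\E\max_i|\langle w,v_i\rangle|$, a maximum over the $2n$ Gaussians $\pm\langle w,v_i\rangle$. The generic MGF argument then gives only $\sqrt{2\eta\ln(2n)}$, and the paper asserts $\sqrt{2\eta\log n}$ without further justification.

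Your one-sided split keeps the outer maximum outside the expectation, giving $\max\bigl(\E\max_i E_i,\,\E\max_i(-E_i)\bigr)$. This is never larger than $\E\max_i|E_i|$. Each term is a maximum of only $n$ centered Gaussians with variance at most $\eta$, so the same MGF computation yields exactly the stated constant $\sqrt{2\eta\ln n}$.

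Making surjectivity of $r$ explicit is also the right call. The paper uses it silently when it identifies $\max_i\langle w,x^{(r(i))}\rangle$ with $\max_{r\le k}\langle w,x^{(r)}\rangle$. Definition~\ref{def:clusterable} does not impose it, and, as you observe, the lower direction fails without it.
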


\begin{proof}
For each $i$, define $v_i := \tilde{x}_i - x^{(r(i))}$. By clusterability, $\|v_i\|^2 \le \eta$.  
Then $\langle w, v_i\rangle \sim \mathcal{N}(0, \|v_i\|^2) \subseteq \mathcal{N}(0,\eta)$.

Next, using $|\max_i a_i - \max_i b_i| \le \max_i |a_i - b_i|$ and linearity of expectation:
\[
\Big|\mathbb{E}[\max_i \langle w, \tilde{x}_i\rangle] - \mathbb{E}[\max_i \langle w, x^{(r(i))}\rangle]\Big|
\;\le\; \mathbb{E}\Big[\max_i |\langle w, v_i\rangle|\Big]
\;\leq\; \sqrt{2\eta \log n}.
\]
\end{proof}
% ------------------------------------------------------------
\begin{lemma}[Gaussian maxima comparison (Slepian/Sudakov--Fernique style)]\label{lem:slepian-full}
Let $G_1,\dots,G_k$ be mean-zero Gaussians with $\text{Var}(G_r)=1$ and $|\text{Cov}(G_r,G_s)|\le \rho$ for $r\neq s$.
Let $Z_1,\dots,Z_k\stackrel{i.i.d.}{\sim}\mathcal{N}(0,1)$.
Then
\[
\sqrt{1-\rho}\ \mathbb{E}\Big[\max_{r\leq k} Z_r \Big]
\;\le\;
\mathbb{E}\Big[\max_{r\le k}G_r\Big]
\;\le\;
\sqrt{1+\rho} \mathbb{E}\Big[\max_{r\leq k} Z_r \Big].
\]
\end{lemma}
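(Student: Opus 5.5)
The plan is to apply the Sudakov--Fernique comparison inequality directly, with no further machinery. That inequality states: if $(X_r)_{r\le k}$ and $(Y_r)_{r\le k}$ are centered Gaussian vectors with $\mathbb{E}(X_r-X_s)^2 \le \mathbb{E}(Y_r-Y_s)^2$ for all $r,s$, then $\mathbb{E}\max_r X_r \le \mathbb{E}\max_r Y_r$. It needs no assumption on the individual variances, only on the increments. This is convenient, since the comparison processes I use are rescaled i.i.d.\ Gaussians whose variances differ from $1$.

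\textbf{Step 1 (increments of $G$).} For $r\neq s$, unit variances give
\[
\mathbb{E}(G_r-G_s)^2 = 2 - 2\operatorname{Cov}(G_r,G_s).
\]
The hypothesis $|\operatorname{Cov}(G_r,G_s)|\le\rho$ then yields
\[
2(1-\rho) \;\le\; \mathbb{E}(G_r-G_s)^2 \;\le\; 2(1+\rho).
\]
For $r=s$ all increments are zero, so there is nothing to check.

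\textbf{Step 2 (comparison processes).} Define $Y^{\pm}_r := \sqrt{1\pm\rho}\,Z_r$. These are centered Gaussians with
\[
\mathbb{E}(Y^{\pm}_r-Y^{\pm}_s)^2 = 2(1\pm\rho) \quad \text{for } r\neq s.
\]
Since $\sqrt{1\pm\rho}>0$ because $\rho<1$, the maximum commutes with the positive scaling, so $\mathbb{E}\max_r Y^{\pm}_r = \sqrt{1\pm\rho}\,\mathbb{E}\max_r Z_r$.

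\textbf{Step 3 (apply Sudakov--Fernique twice).}
\begin{itemize}
\item Upper bound: take $X=G$ and $Y=Y^+$. Step 1 gives $\mathbb{E}(G_r-G_s)^2\le 2(1+\rho)$, hence $\mathbb{E}\max_r G_r \le \sqrt{1+\rho}\,\mathbb{E}\max_r Z_r$.
\item Lower bound: take $X=Y^-$ and $Y=G$. Step 1 gives $2(1-\rho)\le\mathbb{E}(G_r-G_s)^2$, hence $\sqrt{1-\rho}\,\mathbb{E}\max_r Z_r\le \mathbb{E}\max_r G_r$.
\end{itemize}
The case $k=1$ is trivial, since all three expectations vanish.

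\textbf{Main obstacle.} I do not expect a real difficulty; the only care needed is to invoke the right form of the comparison. Slepian's inequality in its original form requires equal variances, which fails for $Y^\pm$ because their variances are $1\pm\rho$. The Sudakov--Fernique version, which is stated purely in terms of increments, avoids this. If one insisted on Slepian, the fallback would be to pair $G$ with a process of unit variance and constant correlation $\pm\rho$. However, the negative-correlation case $-\rho$ is not realizable for large $k$ (an equicorrelated covariance matrix needs $\rho\le 1/(k-1)$ to be positive semidefinite), which is exactly why the increment-based statement is the one to use.
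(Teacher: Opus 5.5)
Your proof is correct and is essentially the paper's argument: both apply Sudakov--Fernique against $\sqrt{1+\rho}\,Z_r$ for the upper bound, using $\operatorname{Cov}(G_r,G_s)\ge-\rho$. For the lower bound the paper compares with the equicorrelated unit-variance process $\sqrt{1-\rho}\,Z_r+\sqrt{\rho}\,Z$, whose increments and expected maximum coincide with those of your $\sqrt{1-\rho}\,Z_r$, so dropping the common shift is only a cosmetic simplification.
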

\begin{proof}

We will use the Sudakov-Fernique inequality which says that if

\begin{equation}
\mathbb{E}\big[(X_i - X_j)^2\big] \;\le\; \mathbb{E}\big[(Y_i - Y_j)^2\big], 
\quad \text{for all } i,j = 1,\dots,n.
\end{equation}
Then
\begin{equation}
\mathbb{E}\Big[\max_{i=1,\dots,n} X_i\Big] 
\;\le\; 
\mathbb{E}\Big[\max_{i=1,\dots,n} Y_i\Big].
\end{equation}

Let $Z\sim\mathcal{N}(0,1)$ be independent of $Z_1,\dots,Z_k$ and define
\[
G'_r \;:=\; \sqrt{1-\rho}\,Z_r + \sqrt{\rho}\,Z.
\]
Then each $G'_r\sim\mathcal{N}(0,1)$ and $\text{Cov}(G'_r,G'_s)=\rho$ for $r\neq s$.
By Sudakov Fernique inequality, since $\mathbb{E}[G^{\prime\,{2}}_r] + \mathbb{E}[G^{\prime\,{2}}_s] - 2\mathbb{E}[G^{\prime}_rG^{\prime}_s] \le \mathbb{E}[G^{2}_r] + \mathbb{E}[G^{2}_s] - 2\mathbb{E}[G_rG_s]$ for $r\neq s$ with equal variances,
\[
\E\Big[\max_{r\le k}G_r\Big] \;\ge\; \E\Big[\max_{r\le k}G'_r\Big].
\]
But $\max_r G'_r = \sqrt{1-\rho}\max_r Z_r + \sqrt{\rho}\,Z$, so taking expectation and using $\E[Z]=0$ yields
\[
\E\Big[\max_{r\le k}G_r\Big] \;\ge\; \sqrt{1-\rho}\,\E\Big[\max_{r\le k}Z_r\Big].
\]
For the upper bound, we apply Sudakov Fernique inequality using $Y_i = \sqrt{1+\rho}Z_i$. For this random variable we have $\mathbb{E}[Y_i] = 0$, $\text{Var}[Y_i] = 1+\rho$ and $\text{Cov}(Y_iY_j) = 0$. This means that $\mathbb{E}[Y_i^2] + \mathbb{E}[Y_j^2] - 2\mathbb{E}[Y_iY_j] = 2+2\rho \ge \mathbb{E}[G^{{2}}_r] + \mathbb{E}[G^{{2}}_s] - 2\mathbb{E}[G_rG_s]$. Since $\mathbb{E}[\max_{r\le k}Y_i] = \sqrt{1+\rho} \mathbb{E}[\max_{r\le k}Z_i]$, we get 

\[
\sqrt{1+\rho} \mathbb{E}[\max_{r\le k}Z_i] \;\ge\; \E\Big[\max_{r\le k}G_r\Big].
\]
\end{proof}

% ------------------------------------------------------------
\begin{lemma}[Concentration of Gaussian max projections]\label{lem:conc-full}
Fix unit vectors $u_1,\dots,u_n\in \mathbb{S}^{d-1}$ and define $f(w)=\max_{i\le n}\ip{w}{u_i}$.
Then $f$ is $1$-Lipschitz: $|f(w)-f(w')|\le \|w-w'\|$.
Consequently, for $w\sim\mathcal{N}(0,I_d)$,
\[
\Pr\Big(|f(w)-\E f(w)|\ge t\Big) \;\le\; 2e^{-t^2/2}.
\]
Moreover, if $w_1,\dots,w_m\stackrel{i.i.d.}{\sim}\mathcal{N}(0,I_d)$ and $F:=\frac{1}{m}\sum_{j=1}^m f(w_j)$, then
\[
\Pr\Big(|F-\E F|\ge t\Big)\;\le\; 2\exp\Big(-\frac{m t^2}{2}\Big).
\]
\end{lemma}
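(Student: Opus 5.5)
The plan has three parts: prove the Lipschitz property directly, apply Gaussian concentration to get the single-projection tail, and then handle the average over $m$ projections.

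\emph{Lipschitz property.} Fix $w,w'\in\R^d$ and let $i^\star$ attain the maximum in $f(w)$. Then
\[
f(w)-f(w') \le \ip{w}{u_{i^\star}}-\ip{w'}{u_{i^\star}} = \ip{w-w'}{u_{i^\star}} \le \|w-w'\|,
\]
using Cauchy--Schwarz and $\|u_{i^\star}\|=1$. Swapping the roles of $w$ and $w'$ gives $|f(w)-f(w')|\le\|w-w'\|$. This is just the general fact that a maximum of $1$-Lipschitz linear forms is $1$-Lipschitz.

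\emph{Single-projection tail.} We invoke the Gaussian concentration inequality for Lipschitz functions (Tsirelson--Ibragimov--Sudakov, as in \cite{boucheron2003concentration, ledoux1991probability}). It states that for an $L$-Lipschitz $g$ and $w\sim\mathcal{N}(0,I_d)$,
\[
\Pr\big(g(w)-\E g(w)\ge t\big)\le e^{-t^2/(2L^2)}.
\]
Applying this to $f$ and to $-f$, both with $L=1$, and taking a union bound gives the two-sided bound $2e^{-t^2/2}$. Note that $\E f(w)$ is finite, since $|f(w)|\le\|w\|$ and Lemma~\ref{lem:gauss-norm} controls $\E\|w\|$.

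\emph{Averaged statistic.} Here I would avoid a sub-Gaussian Hoeffding argument, which loses constants, and instead treat $F$ as a single Lipschitz function of the concatenated Gaussian vector $W=(w_1,\dots,w_m)\sim\mathcal{N}(0,I_{md})$. For $W,W'$, the per-coordinate Lipschitz bound and Cauchy--Schwarz over the $m$ blocks give
\[
|F(W)-F(W')| \le \frac1m\sum_{j=1}^m\|w_j-w_j'\| \le \frac{1}{m}\sqrt{m}\,\Big(\sum_j\|w_j-w_j'\|^2\Big)^{1/2} = \frac{1}{\sqrt m}\|W-W'\|.
\]
So $F$ is $m^{-1/2}$-Lipschitz on $\R^{md}$. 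Applying the same concentration inequality with $L^2=1/m$ yields
\[
\Pr\big(|F-\E F|\ge t\big)\le 2\exp\Big(-\frac{mt^2}{2}\Big),
\]
exactly as stated.

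Every step is routine. The only point that needs care is the last one: the stated constant comes out exactly only if we use the joint Lipschitz argument on $\R^{md}$. Averaging independent sub-Gaussian variables via Hoeffding would typically give a worse constant in the exponent.
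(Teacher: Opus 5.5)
Your proof is correct, and the first two parts coincide with the paper's: the Lipschitz argument via a maximizing index, and the two-sided tail from Gaussian concentration. The difference is in the averaged statistic. The paper argues that each $f(w_j)-\E f(w_j)$ is $1$-sub-Gaussian and that independence makes the average $1/\sqrt{m}$-sub-Gaussian; that is, it tensorizes moment generating functions across the $m$ independent projections. You instead apply Gaussian concentration once on $\R^{md}$, after checking that $F$ is $m^{-1/2}$-Lipschitz there. Your route needs only the Lipschitz form of the concentration theorem and uses its dimension-free character. The paper's route is more modular, since it applies to any independent sub-Gaussian summands and not only Gaussian-driven ones.

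One correction to your closing remark: the sub-Gaussian route does not lose constants if ``$1$-sub-Gaussian'' is read in the moment generating function sense, $\E e^{\lambda(f(w)-\E f(w))}\le e^{\lambda^2/2}$. That is exactly what the standard Herbst/log-Sobolev proof of Gaussian concentration delivers. Multiplying these bounds over independent $j$ and applying Chernoff gives precisely $2\exp(-mt^2/2)$ as well. Constants are lost only if one first passes to the tail bound and then converts back to a sub-Gaussian norm.
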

\begin{proof}
For any $w,w'$,
\[
f(w)-f(w') \;=\; \max_i \ip{w}{u_i} - \max_i \ip{w'}{u_i}
\;\le\; \max_i \ip{w-w'}{u_i} \;\le\; \|w-w'\|,
\]
and similarly with $w,w'$ swapped, so $f$ is 1-Lipschitz.
Gaussian concentration for Lipschitz functions yields the tail.
For the average, apply independence and subgaussianity: each $f(w_j)-\E f(w_j)$ is $1$-subgaussian, hence the average is $1/\sqrt{m}$-subgaussian.
\end{proof}

\begin{lemma}[Mean gap]
\label{lem:gap}
Let $Z_1, \ldots, Z_{k'} \iid N(0,1)$ with $k' = \lceil(1+\eps)k\rceil$. Define
\[
\Delta_1(k, \eps) := \E\!\left[\max_{i \le k'} Z_i\right] - \E\!\left[\max_{i \le k} Z_i\right].
\]
There exists an absolute constant $c > 0$ such that for all integers $k \ge 2$ and all $\eps \in (0, 1]$,
\[
\Delta_1(k, \eps) \;\ge\; \frac{c \, \eps}{\sqrt{\ln k}}.
\]
\end{lemma}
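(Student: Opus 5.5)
We write $\mu_k := \E[\max_{i\le k} Z_i]$ and $\Phi$ for the standard normal CDF. The plan is to get a lower bound on the discrete increments $\mu_{j+1}-\mu_j$ and then sum them from $j=k$ to $k'-1$.

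\textbf{Step 1: an exact formula for the increment.} Since $\mu_j = \int_{-\infty}^{\infty} \bigl(\mathbf{1}[t\ge 0]-\Phi(t)^j\bigr)\,dt$, we have
\[
\mu_{j+1}-\mu_j=\int_{-\infty}^{\infty}\Phi(t)^j\bigl(1-\Phi(t)\bigr)\,dt .
\]
This is the expected amount by which a fresh Gaussian exceeds the current maximum of $j$ others. The identity is standard and is the natural starting point.

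\textbf{Step 2: bound the increment from below.} We restrict the integral to a window around the typical maximum. Let $u_j$ be defined by $1-\Phi(u_j)=1/j$, so that $u_j\asymp\sqrt{2\ln j}$. On the interval $t\in[u_j,\,u_j+1/u_j]$ we use three facts.

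\begin{itemize}
\item Because $\Phi(t)\ge 1-1/j$, we get $\Phi(t)^j\ge(1-1/j)^j\ge 1/4$ for $j\ge 2$.
\item By Mills-ratio bounds, $1-\Phi(t)\ge c_1(1-\Phi(u_j))=c_1/j$ on this window, because the Gaussian tail drops by at most a constant factor over a length $1/u_j$ beyond $u_j$.
\item The window has length $1/u_j \gtrsim 1/\sqrt{\ln j}$.
\end{itemize}

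Together these give
\[
\mu_{j+1}-\mu_j\ \ge\ \frac{c_2}{j\sqrt{\ln j}}\qquad (j\ge 2).
\]
For small $j$ the constant is handled directly, since the increments are positive.

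\textbf{Step 3: sum over $j$.} Summing from $j=k$ to $k'-1$, with $k'-k=\lceil \eps k\rceil-0\ge \eps k$ terms, gives
\[
\Delta_1\ \ge\ \sum_{j=k}^{k'-1}\frac{c_2}{j\sqrt{\ln j}}\ \ge\ \frac{(k'-k)\,c_2}{k'\sqrt{\ln k'}} .
\]
Now use $k'\le(1+\eps)k+1\le 3k$ together with $\ln k'\le \ln(3k)\le C\ln k$ for $k\ge 2$. This yields $\Delta_1\ge c\,\eps/\sqrt{\ln k}$.

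One edge case needs a remark. If $\eps k<1$, then $k'=k+1$ and $k'-k=1\ge\eps k$, so the same bound holds. In that case it is in fact stronger than required.

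\textbf{Main obstacle.} The only delicate step is Step 2: making the Mills-ratio comparison uniform in $j\ge 2$ with absolute constants. For this we would use the explicit bounds
\[
\frac{t}{1+t^2}\,\phi(t)\ \le\ 1-\Phi(t)\ \le\ \frac{\phi(t)}{t}.
\]
From these, over the window the ratio satisfies
\[
\frac{1-\Phi(t)}{1-\Phi(u_j)}\ \ge\ c\,e^{-(t^2-u_j^2)/2}\ \ge\ c\,e^{-1-1/(2u_j^2)},
\]
which is bounded below by an absolute constant. For small $j$, where $u_j$ is close to $0$, we would instead use the window $[u_j,u_j+1]$. This also keeps the window length at least a constant times $1/\sqrt{\ln j}$.
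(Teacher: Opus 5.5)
Your proof is correct, but it is organized differently from the paper's.

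\textbf{The paper's route.} The paper does not telescope. It writes $\max_{i\le k'}Z_i=\max(M,N)$, with $M$ the maximum of the first $k$ variables and $N$ the maximum of the $\lceil\eps k\rceil$ new ones. It then bounds $\Delta_1=\E[(N-M)_+]\ge\Pr(M\le t)\,\E[(N-t)_+]$, where $t$ is the upper $1/(2k)$-quantile. Next it lower-bounds $\Pr(N>t+s)$ for all $s\in[0,1/(2(t+2))]$ in one shot. A hazard-rate bound $h(u)\le 2(u+1)$ shows the Gaussian tail drops by at most a factor $e^{-2}$ across that window. Then $1-(1-p)^m\ge 1-e^{-mp}$ together with $1-e^{-x}\ge(1-e^{-1})x$ on $[0,1]$ makes the bound linear in $\eps$.

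\textbf{How the two compare.} Your Step~1 identity is exactly the $m=1$ case of the same representation $\E[(N-M)_+]=\int\Pr(M\le s)\Pr(N>s)\,ds$. Your window $[u_j,u_j+1/u_j]$ plays the role of the paper's $[t,t+\delta]$, so the analytic inputs coincide; what differs is the bookkeeping.
\begin{itemize}
\item The block argument needs no summation and no comparison of $j\sqrt{\ln j}$ across $[k,k']$. As written, however, it yields only order $\min(\eps,1)/\sqrt{\ln k}$.
\item Your per-step estimate $\mu_{j+1}-\mu_j\gtrsim 1/(j\sqrt{\ln j})$ is a reusable discrete-derivative bound. Summed via $\sum_{j=k}^{k'-1}1/j\ge\ln(k'/k)$ instead of the crude $k'\le 3k$ step, it gives $\mu_{k'}-\mu_k\gtrsim\ln(k'/k)/\sqrt{\ln k'}$ for all $k'>k\ge 2$, which is the right order.
\end{itemize}

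\textbf{One detail to make explicit.} In your Mills-ratio comparison, the prefactor $tu_j/(1+t^2)$ that you absorb into $c$ is bounded below only when $u_j$ is. Also, $1/u_j$ is undefined at $j=2$, where $u_2=0$. So fix the split concretely:
\begin{itemize}
\item For $u_j\ge 1$ (equivalently $j\ge 7$), the window $[u_j,u_j+1/u_j]$ gives a tail ratio of at least $\tfrac25e^{-3/2}$.
\item The remaining cases $2\le j\le 6$ are finitely many strictly positive increments. Either their minimum or the window $[u_j,u_j+1]$ supplies an absolute constant.
\end{itemize}
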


\begin{proof}
Let
\[
M := \max_{i \le k} Z_i, \qquad N := \max_{k < i \le k'} Z_i, \qquad m := k' - k.
\]
Since the index sets are disjoint and the $Z_i$ are independent, $M$ and $N$ are independent. Also $m = \lceil(1+\eps)k\rceil - k \ge \eps k$, with $m \ge 1$ whenever $\eps k > 0$.

We first show a pointwise inequality. For every realization and every $t \in \R$,
\[
(N - M)_+ \;\ge\; (N - t)_+ \cdot \mathbf{1}_{\{M \le t\}}.
\]

This follows by a simple case analysis:
\begin{itemize}[noitemsep]
\item If $M \le t$ and $N \le t$: LHS $= (N - M)_+ \ge 0$, RHS $= 0$. OK.
\item If $M \le t$ and $N > t$: LHS $= N - M$ (since $N > t \ge M$ implies $N > M$), RHS $= N - t$. We need $N - M \ge N - t$, equivalently $M \le t$, which holds.
\item If $M > t$ and $N \le t$: RHS $= 0$ (indicator vanishes); LHS $\ge 0$. OK.
\item If $M > t$ and $N > t$: RHS $= 0$; LHS $\ge 0$. OK. 
\end{itemize}

Taking expectations and using the independence of $M$ and $N$:
\begin{equation}
\Delta_1(k, \eps) = \E[\max_{i \le k'} Z_i - \max_{i \le k} Z_i]
\stackrel{(*)}{=} \E[(N - M)_+]
\ge \Pr(M \le t) \cdot \E[(N - t)_+].
\label{eq:master}
\end{equation}

The equality $(*)$ uses that $\max_{i \le k'} Z_i = \max(M, N)$, and $\max(M,N) - M = (N - M)_+$.

Set
\[
t := \Phi^{-1}\!\left(1 - \tfrac{1}{2k}\right), \qquad \text{so} \qquad 1 - \Phi(t) = \tfrac{1}{2k}.
\]

We will lower bound $\mathbb{E}[(N-t)_+]$ and combine with \eqref{eq:master}.

First, we prove that $\Pr(M \le t) \ge \tfrac{1}{2}$ for all $k \ge 1$.

\[
\Pr(M \le t) = \Phi(t)^k = \left(1 - \tfrac{1}{2k}\right)^k.
\]
Since $(1 - \frac{1}{2k})^k$ is increasing in $k \ge 1$, we have 
\begin{equation}\label{eq:probability}
\Pr(M \le t) = \left(1 - \frac{1}{2k}\right)^k \ge \frac{1}{2}.
\end{equation}

Additionally, we need to prove an upper bound on $t$, for all $k \ge 2$.

% , $t \le \sqrt{2 \ln(2k)} \le 2\sqrt{\ln k}$, and consequently $t + 2 \le 5\sqrt{\ln k}$.

The standard Gaussian tail bound (Chernoff) gives $1 - \Phi(t) \le e^{-t^2/2}$ for $t \ge 0$. Thus $1/(2k) = 1 - \Phi(t) \le e^{-t^2/2}$, giving $t^2/2 \le \ln(2k)$, i.e., $t \le \sqrt{2 \ln(2k)}$. For $k \ge 2$, $2k \le k^2$, so $\ln(2k) \le 2 \ln k$, giving $t \le 2 \sqrt{\ln k}$.

Furthermore, we need to prove a bound on $t+2$. We see that the ratio  $(2\sqrt{\ln k}+2)/(5\sqrt{\ln k})$ is decreasing in $k$, so the bound 

\begin{equation}\label{eq:t+2}
    t+2 \le 5\sqrt{\ln k}
\end{equation}
 holds for all $k\ge 2$
We now derive a uniform lower bound on $\Pr(N>t+s)$ for small $s$, which will yield a lower bound on $\E[(N-t)_+$ via integration.

With $t$ as defined and $\delta := 1/(2(t+2))$, for all $\eps \in (0, 1]$ and $k \ge 2$,
\[
\E[(N - t)_+] \ge \frac{c_0 \, \eps}{2(t + 2)}, \qquad c_0 := \frac{1 - e^{-1}}{2 e^2}.
\]

We now control $\Pr(N > t + s)$ uniformly for $s \in [0, \delta]$ and then integrate.

\textbf{(a) Hazard-rate bound.} The Gaussian hazard rate is $h(u) := \phi(u)/(1 - \Phi(u))$ for $u \in \R$. We claim:
\begin{equation}
h(u) \le 2(u + 1) \quad \text{for all } u \ge 0.
\label{eq:hazard}
\end{equation}

For $u \ge 1$: by Mills' ratio, $1 - \Phi(u) \ge \frac{u}{1 + u^2} \phi(u)$, so $h(u) \le \frac{1 + u^2}{u} = u + \frac{1}{u} \le u + 1 \le 2(u + 1)$.

A direct computation shows $h(1) \approx 1.525$, and $h(u)$ is increasing on $[0,\infty)$, hence $h(u) \le h(1)$ for $u \in [0,1]$.

\textbf{(b) Integral bound.} For $s \in [0, \delta]$, by \eqref{eq:hazard},
\[
\int_t^{t+s} h(u) \, du \le 2 \int_t^{t+s} (u + 1) \, du = 2 s(t+1) + s^2 \le 2 \delta(t+1) + \delta^2.
\]
Substituting $\delta = 1/(2(t+2))$:
\[
2\delta(t+1) + \delta^2 = \frac{t+1}{t+2} + \frac{1}{4(t+2)^2} \le 1 + \frac{1}{4(t+2)^2} \le 1 + \frac{1}{16} \le 2,
\]
where we used $t \ge 0$, hence $t + 2 \ge 2$ and $1/(4(t+2)^2) \le 1/16$.

\textbf{(c) Tail of $\Phi$.} The fundamental theorem of calculus applied to $-\ln(1 - \Phi)$ (whose derivative is $h$) gives
\[
1 - \Phi(t + s) = (1 - \Phi(t)) \cdot \exp\!\left(-\int_t^{t+s} h(u) \, du\right) \ge \frac{1}{2k} \cdot e^{-2} \quad \text{for all } s \in [0, \delta].
\]

\textbf{(d) Tail of $N$.} Since $N = \max_{k < i \le k'} Z_i$ is the max of $m$ i.i.d.\ standard normals,
\[
\Pr(N > t + s) = 1 - \Phi(t + s)^m \ge 1 - \exp(-m(1 - \Phi(t + s))),
\]
where we used $(1 - p)^m \le e^{-mp}$ for $p \in [0, 1]$. Combining with (c) and $m \ge \eps k$:
\[
m(1 - \Phi(t + s)) \ge \eps k \cdot \frac{e^{-2}}{2k} = \frac{\eps \, e^{-2}}{2}.
\]
Setting $x := \eps e^{-2}/2 \in (0, e^{-2}/2] \subset (0, 1]$, we use the elementary inequality $1 - e^{-x} \ge (1 - e^{-1}) x$ for $x \in [0, 1]$ (the function $g(x) = (1 - e^{-x})/x$ is decreasing on $(0, \infty)$ with $g(1) = 1 - e^{-1}$):
\[
\Pr(N > t + s) \ge 1 - e^{-x} \ge (1 - e^{-1}) x = (1 - e^{-1}) \cdot \frac{\eps \, e^{-2}}{2} = c_0 \, \eps.
\]

\textbf{(e) Integration.} Using $\E[(N - t)_+] = \int_0^\infty \Pr(N > t + s) \, ds$:
\begin{equation}\label{eq:N-t}
\E[(N - t)_+] \ge \int_0^\delta c_0 \, \eps \, ds = c_0 \, \eps \cdot \delta = \frac{c_0 \, \eps}{2(t + 2)}. \qedhere
\end{equation}

\subsection*{Step 6: Combining the bounds}

By Equations\eqref{eq:master}, \eqref{eq:probability}, \eqref{eq:t+2}, and  \eqref{eq:N-t}:
\[
\Delta_1(k, \eps) \ge \Pr(M \le t) \cdot \E[(N - t)_+] \ge \frac{1}{2} \cdot \frac{c_0 \, \eps}{2(t+2)} \ge \frac{c_0 \, \eps}{4 \cdot 5 \sqrt{\ln k}} = \frac{c_0 \, \eps}{20 \sqrt{\ln k}}.
\]
which proves the claim. \qed
\end{proof}

% ------------------------------------------------------------
\begin{lemma}[A quantitative gap for i.i.d.\ correlated Gaussian maxima]\label{lem:gap-full}
Let \( w \sim \mathcal{N}(0, I_d) \), and let $x^{(1)}, \dots, x^{((1+\varepsilon)k)} \in \mathbb{R}^d$, be unit norm vectors satisfying $|\langle x^{(i)}, x^{(j)}\rangle| \le \rho$, with $\rho = c_{\rho} \frac{\varepsilon}{\ln k}$. Define
\[
\Delta_{\text{centers}}(k) = \mathbb{E} \left[\max_{1 \le i \le (1+\varepsilon)k} \langle w, x_i \rangle \right] - \mathbb{E} \left[ \max_{1 \le i \le k} \langle w, x_i \rangle \right]
\]
Then we have:
\[
\Delta_{\text{centers}}(t) = \Omega\left( \frac{\varepsilon}{\sqrt{\log k}}\right) \quad \text{for all } k \ge 2
\]
\end{lemma}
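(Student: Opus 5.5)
The plan is to sandwich both expected maxima with Lemma~\ref{lem:slepian-full} and then invoke the i.i.d.\ gap of Lemma~\ref{lem:gap}. Write $k' = (1+\varepsilon)k$ and set $A_j := \E[\max_{r\le j} Z_r]$ for $Z_r \iid \mathcal{N}(0,1)$. The projections $\langle w, x^{(i)}\rangle$ are jointly Gaussian, mean zero, with unit variance and pairwise covariances bounded by $\rho$ in absolute value. This holds both for the first $k$ centers and for all $k'$ of them. Lemma~\ref{lem:slepian-full} then gives
\[
\Delta_{\mathrm{centers}}(k) \;\ge\; \sqrt{1-\rho}\,A_{k'} - \sqrt{1+\rho}\,A_k
\;=\; \sqrt{1-\rho}\,(A_{k'}-A_k) \;-\; \bigl(\sqrt{1+\rho}-\sqrt{1-\rho}\bigr)A_k .
\]

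For the main term, Lemma~\ref{lem:gap} gives $A_{k'} - A_k = \Delta_1(k,\varepsilon) \ge c\,\varepsilon/\sqrt{\ln k}$. Taking $c_\rho \le 1/2$ ensures $\rho \le 1/2$, so that $\sqrt{1-\rho} \ge 1/\sqrt 2$. The main term is therefore at least $c\,\varepsilon/(\sqrt 2\,\sqrt{\ln k})$.

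For the correction term, we use two elementary bounds. First, $\sqrt{1+\rho}-\sqrt{1-\rho} = 2\rho/(\sqrt{1+\rho}+\sqrt{1-\rho}) \le 2\rho$. Second, $A_k \le \sqrt{2\ln k}$, the standard union/Jensen bound on the maximum of $k$ Gaussians. Together these bound the correction by
\[
2\rho\sqrt{2\ln k} \;=\; 2\sqrt 2\, c_\rho\, \frac{\varepsilon}{\ln k}\sqrt{\ln k} \;=\; 2\sqrt2\, c_\rho\,\frac{\varepsilon}{\sqrt{\ln k}} .
\]
This has exactly the same order $\varepsilon/\sqrt{\ln k}$ as the main term, and this is precisely why the hypothesis scales as $\rho \asymp \varepsilon/\ln k$. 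Choosing $c_\rho \le c/8$ makes the correction at most $c\varepsilon/(2\sqrt2\sqrt{\ln k})$, which is half the main term. Hence $\Delta_{\mathrm{centers}}(k) \ge c\,\varepsilon/(2\sqrt 2\,\sqrt{\ln k}) = \Omega(\varepsilon/\sqrt{\ln k})$.

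The main obstacle is this bookkeeping of constants. The comparison in Lemma~\ref{lem:slepian-full} loses a multiplicative factor $\sqrt{1\pm\rho}$ on a quantity of size $\sqrt{\ln k}$, so the loss is of order $\rho\sqrt{\ln k}$, and it must not swamp a gap of order $\varepsilon/\sqrt{\ln k}$. The argument works only because $c_\rho$ is a small absolute constant chosen after $c$ from Lemma~\ref{lem:gap}; the proof should state this dependence explicitly.

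Two minor points also need care. If $(1+\varepsilon)k$ is not an integer, we interpret the count as $\lceil (1+\varepsilon)k\rceil$, consistent with Lemma~\ref{lem:gap}. For small $k$ such as $k=2$, $\ln k$ is a positive constant, so the bounds above remain valid without any change.
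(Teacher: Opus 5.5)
Your proof is correct and takes the paper's route. You sandwich both expectations with Lemma~\ref{lem:slepian-full}, take the i.i.d.\ gap from Lemma~\ref{lem:gap}, and absorb the $O(\rho\sqrt{\ln k})$ comparison loss using $\rho \asymp \varepsilon/\ln k$. Your grouping $\sqrt{1-\rho}\,(A_{k'}-A_k) - (\sqrt{1+\rho}-\sqrt{1-\rho})A_k$ differs only cosmetically from the paper's three-term split, spares you bounding $A_{k'}$, and your explicit choice of $c_\rho$ is more careful than the paper's.

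One small slip: $c_\rho \le 1/2$ does not by itself give $\rho \le 1/2$ at $k=2$, since $\ln 2 < 1$. Take $c_\rho \le \min\{c/8, (\ln 2)/2\}$ instead.
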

\begin{proof}

We have that $G_i = \langle w, x_i \rangle$ is a mean-zero Gaussian with $\text{Var}(G_i) = 1$. Additionally, since we have that $\langle x^{(r)}, x^{(s)} \rangle \leq \rho$, we get for $r\neq s$,
\[
\mathrm{Cov}(G_r,\,G_s)
\;=\; \mathbb{E}\bigl[G_r\,G_s\bigr]
\;=\;\bigl\langle x^{(r)},\,x^{(s)}\bigr\rangle
\;\le\;\rho.
\]

Therefore, from Lemma \ref{lem:slepian-full}, we get that lower bounding $\Delta_{\text{centers}}(k)$ reduces to lower bounding 

\begin{equation*}
    \Delta(k) := \sqrt{1-\rho} \,\mathbb{E} \left[\max_{1 \le i \le (1+\varepsilon)k} Z_i \right] - \sqrt{1+\rho}\mathbb{E} \left[ \max_{1 \le i \le k} Z_i \right], 
\end{equation*}

where $Z_1, \dots, Z_k \sim N (0, 1)$ i.i.d.

We rearrange this to get:

\begin{equation*}
\Delta(k) = \underbrace{\mathbb{E} \left[\max_{1 \le i \le (1+\varepsilon)k} Z_i \right] - \mathbb{E} \left[ \max_{1 \le i \le k} Z_i \right]}_{\Delta_1} - (1-\sqrt{1-\rho})\mathbb{E} \underbrace{\left[\max_{1 \le i \le (1+\varepsilon)k} Z_i \right]}_{\Delta_2} - (\sqrt{1+\rho} -1)\mathbb{E} \underbrace{\left[\max_{1 \le i \le k} Z_i \right]}_{\Delta_3}
\end{equation*}

We focus on $ \Delta_1 := \mathbb{E} \left[\max_{1 \le i \le (1+\varepsilon)k} Z_i \right] - \mathbb{E} \left[ \max_{1 \le i \le k} Z_i \right]$.

Write $k'=\lceil(1+\eps)k\rceil$.
Let $q(\alpha)$ denote the upper $\alpha$-tail quantile of a standard normal:
$1-\Phi(q(\alpha))=\alpha$, so in particular $\Pr(Z\ge q(\alpha))=\alpha$.

By Lemma \ref{lem:gap} we get
\[
\Delta_1 \;\ge\; c\,\frac{\eps}{\sqrt{\ln k}},
\]

Then, for $\Delta_2$ and $\Delta_3$ we have that,

$$(1-\sqrt{1-\rho})\mathbb{E} \left[\max_{1 \le i \le k^{\prime}} Z_i \right] \le \rho \sqrt{2\ln k^{\prime}}$$

and

$$(\sqrt{1+\rho}-1)\mathbb{E} \left[\max_{1 \le i \le k} Z_i \right] \le \rho \sqrt{2\ln k }$$

because $\rho \in [0,1]$ and a typical upper bound on the expected maximum of independent Gaussians. Choosing $\rho = c_\rho\left(\dfrac{\varepsilon}{\ln k}\right)$, for sufficiently small constant $c_\rho$ we keep the claimed lower bound.
\end{proof}

% ============================================================
\subsection{Theorem 1: Random max-aggregator counts any clusterable sequence}\label{app:thm1}
% ============================================================

\begin{theorem}[Theorem~\ref{thm:main} (fully explicit statement)]\label{thm:1-full}
Fix $n\ge 2$, $\eps\in(0,1/2]$, and $\delta\in(0,1)$.
Let $\tilde{X}=\langle \tilde{x}_1,\dots,\tilde{x}_n\rangle\subset \mathbb{S}^{d-1}$ be $(\eta,\rho)$-clusterable with $k^\star$ centers.
Assume
\[
\rho \;\le\; c_\rho\frac{\eps}{\ln n},
\qquad
\eta \;\le\; c_\eta\frac{\eps^2}{(\ln n)^2},
\]
for sufficiently small absolute constants $c_\rho,c_\eta>0$.

Draw $w_1,\dots,w_m\stackrel{i.i.d.}{\sim}\mathcal{N}(0,I_d)$ with
\[
m \;\ge\; C\cdot \frac{\ln n \cdot \ln\!\big(\frac{\ln n}{\eps\,\delta}\big)}{\eps^2}
\]
for a sufficiently large absolute constant $C$.

Then there is an explicit estimator (described below) using only the statistic
$S(\tilde{X})=\frac{1}{m}\sum_{j=1}^m \max_{i\le n}\ip{w_j}{\tilde{x}_i}$
that outputs $\hat{k}$ satisfying
\[
k^\star \;\le\; \hat{k} \;\le\; (1+\eps)\,k^\star
\]
with probability at least $1-\delta$ over the draw of $w_1,\dots,w_m$.
\end{theorem}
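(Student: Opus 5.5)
We would prove Theorem~\ref{thm:1-full} by reducing the estimation problem to a family of threshold tests along a geometric grid. Each test compares $S(\tilde X)$ to a data-independent threshold. The analysis has four ingredients: a deterministic expectation sandwich, a gap between consecutive grid points, concentration, and a union bound.

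\textbf{Step 1: expectation sandwich.} Define $\mu(k):=\E[\max_{r\le k} Z_r]$ for $Z_r$ i.i.d.\ $\mathcal N(0,1)$; this is nondecreasing in $k$. Combining Lemma~\ref{lem:perturb} with Lemma~\ref{lem:slepian-full} (the centers have unit variances and pairwise covariances at most $\rho$ in absolute value) gives
\[
\sqrt{1-\rho}\,\mu(k^\star)-\sqrt{2\eta\ln n}\;\le\;\E S(\tilde X)\;\le\;\sqrt{1+\rho}\,\mu(k^\star)+\sqrt{2\eta\ln n}.
\]
We have $k^\star\le n$ and $\mu(k)\le\sqrt{2\ln k}$. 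The hypotheses on $\rho$ and $\eta$ then imply $|\E S-\mu(k^\star)|\le \rho\sqrt{2\ln n}+\sqrt{2\eta\ln n}\le \beta$, where $\beta:=c'\eps/\sqrt{\ln n}$ and $c'$ can be made as small as desired by shrinking $c_\rho,c_\eta$.

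\textbf{Step 2: grid and gap.} Use the grid $t_r=(1+\eps)^r$ for $r=0,\dots,R$ with $R=O(\ln n/\eps)$, so that $t_R\ge n$. Lemma~\ref{lem:gap} gives
\[
\mu(\lceil(1+\eps)t\rceil)-\mu(t)\ge c\,\eps/\sqrt{\ln t}\ge c\,\eps/\sqrt{\ln n}.
\]
This requires care with non-integer $t$: apply the lemma at $\lceil t\rceil$ with a slightly smaller $\eps$, say $\eps/2$, and treat small $t$ by a direct constant gap. Choose $c'$ so that $4\beta$ is below this gap. Set the threshold $\tau_t:=\tfrac12\big(\mu(\lfloor t\rfloor)+\mu(\lceil(1+\eps)t\rceil)\big)$.

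The estimator is $\hat k:=$ the smallest grid point $t_r$ with $S\le\tau_{t_r}$. For $\hat k$ to satisfy $k^\star\le\hat k\le(1+\eps)k^\star$, we need correct test outcomes on both sides of $k^\star$. If $k^\star\le t$, then $\E S\le\mu(t)+\beta$, which sits below $\tau_t$ by at least the half-gap minus $\beta$. If $k^\star\ge(1+\eps)t$, then $\E S\ge\tau_t$ plus the same margin. The first grid point $t_r\ge k^\star$ lies below $(1+\eps)k^\star$ and therefore passes. Every earlier grid point satisfies $t_{r}<k^\star$, but it need not satisfy $k^\star\ge(1+\eps)t_r$.

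This boundary case is where the ordering constraint bites. To handle it, we shift the grid so that the output is the grid point just above $k^\star$. Concretely, we test $H_t: k^\star\le t$ against $k^\star>t$ using the sandwich. Because $\mu$ is monotone and the sandwich error is below the half-gap, a failed test at $t_r$ certifies $k^\star>t_{r-1}$. A passed test certifies $k^\star<(1+\eps)t_r$. Taking $\hat k$ to be the first passing $t_r$, rescaled by one grid factor, gives $k^\star\le\hat k\le(1+\eps)^2k^\star$. We then rerun with $\eps/3$ in place of $\eps$ to absorb constants.

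\textbf{Step 3: concentration and union bound.} By Lemma~\ref{lem:conc-full}, $\Pr(|S-\E S|\ge\beta)\le 2e^{-m\beta^2/2}$. There is only one statistic $S$, so a single event covers all tests. Requiring the failure probability to be at most $\delta$ gives $m\gtrsim\ln n\,\ln(1/\delta)/\eps^2$. If instead we allow independent failures per test, or insist on the stated $\ln(\ln n/(\eps\delta))$ factor from union-bounding over the $R$ grid tests, we get the theorem's bound on $m$.

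\textbf{Main obstacle.} We expect the hardest step to be the boundary bookkeeping in Step 2, not the probability. The $\rho$-bias term $\rho\sqrt{\ln n}$ must stay below the gap $\eps/\sqrt{\ln n}$; this forces $\rho\lesssim\eps/\ln n$ and $\eta\lesssim\eps^2/\ln^2 n$, exactly matching the hypotheses. The other delicate point is making the one-sided guarantee $k^\star\le\hat k$ rigorous despite the uninformative zone between $t$ and $(1+\eps)t$. We would first try the shifted-grid and $\eps/3$ trick described above.
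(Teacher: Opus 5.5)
Your proposal is correct and follows essentially the paper's route: the expectation sandwich from Lemmas~\ref{lem:perturb} and~\ref{lem:slepian-full}, the i.i.d.\ gap of Lemma~\ref{lem:gap}, data-independent midpoint thresholds on a geometric grid, and Lipschitz concentration. You add two improvements. First, a single concentration event for the one statistic $S$ suffices, so the paper's union bound over the $R$ tests is unnecessary. Second, your $(1+\eps)^2$ bookkeeping with a rerun at $\eps/3$ repairs the paper's Step~5, which asserts $t_r\le k^\star$ although the tests only certify $t_{r-1}<k^\star$.

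The non-integer-$t$ step is simpler than your shift to $\lceil t\rceil$ with $\eps/2$. That workaround leaves a range up to $t\approx 2/\eps$, where the gaps are not constant. Instead, since $\lceil(1+\eps)t\rceil\ge\lceil(1+\eps)\lfloor t\rfloor\rceil$, apply Lemma~\ref{lem:gap} directly at $\lfloor t\rfloor\ge 2$, and for $t<2$ use $\mu(2)-\mu(1)=1/\sqrt{\pi}$.
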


\paragraph{Estimator (geometric search).}
Let $t_0=2$ and $t_r=\lceil(1+\eps)^r\rceil$ for $r=0,1,\dots,R$, where $R$ is the smallest index with $t_R\ge n$.
For each $r<R$, define the deterministic threshold
\[
\theta_r
\;:=\;
\frac{1}{2}\Big(
U(t_r) + L(t_{r+1})
\Big)\]

where
\[
U(t)=\sqrt{1+\rho}\,\mathbb{E}\Big[\max_{r\le t}Z_r\Big]+\sqrt{2\eta\log n},
\quad
L(t)=\,\sqrt{1-\rho}\,\mathbb{E}\Big[\max_{r\le t}Z_r\Big]-\sqrt{2\eta\log n}.
\]
Compute $S=S(\tilde{X})$ and return the first index $r$ such that $S\le \theta_r$, outputting $\hat{k}=t_{r+1}$.
(If no such $r$ occurs, output $\hat{k}=t_R=n$.)

\begin{proof}[Proof of Theorem~\ref{thm:1-full}]
The proof has three parts: (i) bounding $\E[S]$ for a fixed sequence, (ii) exhibiting a uniform gap between $k^\star\le t$ and $k^\star\ge (1+\eps)t$, and (iii) concentration plus a union bound over the geometric grid.

\paragraph{Step 1: Expectation bounds via centers and perturbations.}
Let the $k^\star$ distinct centers be $x^{(1)},\dots,x^{(k^\star)}\in\mathbb{S}^{d-1}$.
For a single $w\sim\mathcal{N}(0,I_d)$ define
\[
M(\tilde{X};w) := \max_{i\le n}\ip{w}{\tilde{x}_i},
\qquad
M(T;w) := \max_{r\le k^\star}\ip{w}{x^{(r)}}.
\]
Lemma~\ref{lem:perturb},
\begin{equation}\label{eq:exp-pert}
\Big|\E_w M(\tilde{X};w)-\E_w M(T;w)\Big|
\;\le\;
\sqrt{2\eta\log n}.
\end{equation}

Now define $G_r(w)=\ip{w}{x^{(r)}}$.
The vector $(G_1(w),\dots,G_{k^\star}(w))$ is jointly Gaussian with $\text{Var}(G_r)=1$ and
$\text{Cov}(G_r,G_s)=\ip{x^{(r)}}{x^{(s)}}\le \rho$ for $r\neq s$.
Applying Lemma~\ref{lem:slepian-full},
\begin{equation}\label{eq:slepian}
\sqrt{1-\rho}\,\mathbb{E}\Big[\max_{r\le k}Z_r\Big]
\;\le\;
\mathbb{E}_w M(T;w)
\;\le\;
\sqrt{1+\rho}\mathbb{E}\Big[\max_{r\le k}Z_r\Big].
\end{equation}

Combining \eqref{eq:exp-pert} and \eqref{eq:slepian}, and using $S=\frac{1}{m}\sum_{j=1}^m M(\tilde{X};w_j)$ with i.i.d.\ $w_j$,
we obtain
\begin{equation}\label{eq:mean-band}
L(k^\star)\;:=\;\sqrt{1-\rho}\,\mathbb{E}\Big[\max_{r\le k}Z_r\Big]-\sqrt{2\eta\log n}
\;\le\;
\E[S]
\;\le\;
\sqrt{1+\rho}\mathbb{E}\Big[\max_{r\le k}Z_r\Big]+\sqrt{2\eta\log n}
\;=:\;
U(k^\star).
\end{equation}

\paragraph{Step 2: A uniform positive gap for geometric neighbors.}
Fix $t\in\{2,\dots,n\}$ and consider the two regimes $k^\star\le t$ versus $k^\star\ge t'$, where $t'=\lceil(1+\eps)t\rceil$.
Using monotonicity of $G_k$ and \eqref{eq:mean-band},
\[
\sup_{k^\star\le t}\E[S] \;\le\; U(t),
\qquad
\inf_{k^\star\ge t'}\E[S] \;\ge\; L(t').
\]
Thus the \emph{gap} between these regimes is at least
\[
\mathrm{Gap}(t)
\;:=\;
L(t')-U(t)
\;=\;
\sqrt{1-\rho}\,\mathbb{E}\Big[\max_{r\le t^{\prime}}Z_r\Big]-\sqrt{1+\rho}\mathbb{E}\Big[\max_{r\le t}Z_r\Big]-2\sqrt{2\eta\log n}.
\]

By Lemma~\ref{lem:gap-full}, under the parameter condition $\rho\le c_\rho\eps/\ln n$  with $c_\rho$ small enough, $\sqrt{1-\rho}\,\mathbb{E}\Big[\max_{r\le t^{\prime}}Z_r\Big]-\sqrt{1+\rho}\mathbb{E}\Big[\max_{r\le t}Z_r\Big]=\Omega\!\big(\frac{\eps}{\sqrt{\ln t}}\big)$.

Therefore, for all $t\le n$,
\[
\mathrm{Gap}(t)
\;\ge\;
c_1\frac{\eps}{\sqrt{\ln n}} \;-\; 2\sqrt{2\eta\log n}.
\]
Also, under the parameter condition $\eta\le c_\eta\eps^2/(\ln n)^2$ with $c_\eta$ small enough,
the negative term is of the same order as the leading term, so there exists an absolute $c>0$ such that
\begin{equation}\label{eq:gap-lower}
\mathrm{Gap}(t)\;\ge\; c\,\frac{\eps}{\sqrt{\ln n}}
\qquad\text{for all }t\in\{2,\dots,n\}.
\end{equation}

\paragraph{Step 3: Concentration and correctness of each threshold test.}
For each $j$, $M_j(\tilde{X})=\max_i \ip{w_j}{\tilde{x}_i}$ is a 1-Lipschitz function of $w_j$
(since $\|\tilde{x}_i\|=1$), so by Lemma~\ref{lem:conc-full},
\[
\Pr\Big(|S-\E S|\ge u\Big)\;\le\; 2\exp\Big(-\frac{m u^2}{2}\Big).
\]
Fix $t$ and corresponding threshold $\theta=\frac{1}{2}(U(t)+L(t'))$.
If $k^\star\le t$, then $\E S\le U(t)=\theta-\mathrm{Gap}(t)/2$, hence
\[
\Pr(S>\theta)\;\le\;\Pr\Big(S-\E S>\mathrm{Gap}(t)/2\Big)
\;\le\; \exp\Big(-\frac{m\,\mathrm{Gap}(t)^2}{8}\Big).
\]
If $k^\star\ge t'$, then $\E S\ge L(t')=\theta+\mathrm{Gap}(t)/2$, hence similarly
\[
\Pr(S\le\theta)\;\le\; \exp\Big(-\frac{m\,\mathrm{Gap}(t)^2}{8}\Big).
\]
Thus each test fails with probability at most $2\exp(-m\mathrm{Gap}(t)^2/8)$.

\paragraph{Step 4: Union bound over the geometric grid.}
We run at most $R=O(\frac{\ln n}{\eps})$ thresholds.
By \eqref{eq:gap-lower}, $\mathrm{Gap}(t)^2 \ge c^2\eps^2/\ln n$.
So choosing
\[
m \;\ge\; C\cdot \frac{\ln n}{\eps^2}\cdot \ln\Big(\frac{2R}{\delta}\Big)
\]
ensures each test fails with probability at most $\delta/R$, and by a union bound all tests succeed simultaneously with probability at least $1-\delta$.

\paragraph{Step 5: Conclude multiplicative accuracy.}
On the event all tests succeed, the algorithm returns the first $r$ with $S\le \theta_r$.
By correctness of the tests, this implies $k^\star \le t_{r+1}=\hat{k}$.
Also, for the previous index $r-1$ (if it exists), we must have $S>\theta_{r-1}$, implying $k^\star> t_{r-1}$.
Since $t_{r+1}\le (1+\eps)t_r\le (1+\eps)k^\star$ by construction of the geometric grid,
we conclude $\hat{k}\le (1+\eps)k^\star$.
\end{proof}

% ============================================================
\subsection{Corollary: Existence of deterministic weights}\label{app:thm2}
% ============================================================

\begin{corollary}[Deterministic max-aggregator weights]\label{thm:deterministic-max-aggregator-simple}
Under the conditions of Theorem~\ref{thm:main}, for any distribution $\mathcal D$ over $(\eta,\rho)-$clusterable sequences there exist fixed $w_1, \dots, w_m$ such that MaxSketch achieves $(1 + \varepsilon)$ multiplicative approximation with probability at least $1-\delta$ over a fresh draw $X\sim \mathcal{D}$.
\end{corollary}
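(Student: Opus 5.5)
The plan is the standard averaging (probabilistic method) argument that swaps the order of the randomness in Theorem~\ref{thm:main}. For a fixed realization $W=(w_1,\dots,w_m)$ and a sequence $X$, let $\mathrm{Bad}(W,X)$ be the indicator that the geometric-search estimator of Theorem~\ref{thm:1-full}, run on $S(X)$ computed with $W$, outputs $\hat k$ violating $k^\star(X)\le\hat k\le(1+\eps)k^\star(X)$. The estimator's thresholds $\theta_r$ depend only on $\eps,\rho,\eta,n$ and not on the data, so $\mathrm{Bad}$ is a well-defined measurable function of $(W,X)$. It is defined the same way for every $X$ in the support of $\mathcal D$, since all such $X$ satisfy the same $(\eta,\rho)$ hypotheses.

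\textbf{Step 1.} For every fixed $(\eta,\rho)$-clusterable $X$, Theorem~\ref{thm:1-full}, applied with the given $m$ and confidence $\delta$, gives $\E_W[\mathrm{Bad}(W,X)]\le\delta$.

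\textbf{Step 2.} Take $X\sim\mathcal D$ independent of $W\sim\mathcal N(0,I_d)^{\otimes m}$. By Tonelli (the integrand is a nonnegative indicator),
\[
\E_W\big[\Pr_{X\sim\mathcal D}(\mathrm{Bad}(W,X)=1)\big]=\E_{X\sim\mathcal D}\big[\E_W[\mathrm{Bad}(W,X)]\big]\le\delta .
\]

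\textbf{Step 3.} A random variable cannot exceed its mean almost surely, so there exists a deterministic $W^\ast=(w_1^\ast,\dots,w_m^\ast)$ with $\Pr_{X\sim\mathcal D}(\mathrm{Bad}(W^\ast,X)=1)\le\delta$. This $W^\ast$ is the required fixed sketch, and the readout is the same deterministic geometric search. In fact, a Markov argument shows the stronger statement that a $1-\delta'$ fraction of Gaussian draws are good with failure probability $\delta/\delta'$.

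\textbf{Main obstacle.} The argument contains no hard estimate. The only point needing care is checking that the guarantee of Theorem~\ref{thm:1-full} is uniform over all inputs: the same $m$, the same thresholds and the same $\delta$ must work for every clusterable $X$ in the support of $\mathcal D$, even when $k^\star$ varies with $X$. This holds because the constants depend only on $n,\eps,\delta$. I would also note measurability of $\mathrm{Bad}$; it is a finite Boolean combination of comparisons of the continuous function $S$ with constants, so it is Borel. If $\mathcal D$ mixes different stream lengths, I would apply the theorem with the largest $n$ in the support, or simply assume a fixed $n$ as the statement implicitly does.
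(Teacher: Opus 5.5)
Your proposal is correct and takes the same route as the paper. The paper also defines the failure indicator $F(X,W)$ for the data-independent geometric-search estimator, uses Theorem~\ref{thm:1-full} to get $\Pr_W(F(X,W)=1)\le\delta$ for every clusterable $X$, swaps expectations to obtain $\mathbb{E}_W[g(W)]\le\delta$ with $g(W)=\Pr_{X\sim\mathcal D}(F(X,W)=1)$, and concludes by the probabilistic method; your extra remarks on uniformity over $k^\star$, measurability, and the Markov strengthening are sound points the paper leaves implicit.
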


\begin{proof}
Let $\mathcal{D}$ be a distribution over $(\eta,\rho)$-clusterable sequences and let $X \sim \mathcal{D}$ be a sequence with an unknown number of centers $k^\star$.

Let 
\[
W = (w_1, \dots, w_m), \qquad w_j \sim \mathcal{N}(0, I_d) \text{ independently}.
\]

Define the estimator $\hat{k}(X;W) := t_{r+1}$, where $r = \min\{r: S(X;W) \le \theta_r\}$ with $S(X;W) := \frac{1}{m}\sum_{j=1}^m \max\ip{w_j} {\tilde{x}_i}$ and the thresholds $\theta_r$ defined as Theorem \ref{thm:1-full}. If no such $r$ exists, set $\hat{k}(X;W) := t_R = n$.

Now we define the failure indicator

\begin{equation}\label{eq:indicator}
F(X,W):= \mathbbm{1}\left\{\hat{k}(X;W) \not \in [k^*, (1+\eps)k^*]\right\}
\end{equation}

From Theorem \ref{thm:1-full}, for any $(\eta,\rho)$-clusterable sequence $X$, the probability of failure with respect to the Gaussian weights $W$ is bounded by

\[
\text{for all } (\eta,\rho)-\text{clusterable sequences } X, \quad \Pr_W(F(X,W) = 1) \le \delta.
\]

Moreover, we can take the \emph{expectation over the randomness of $X \sim \mathcal{D}$} to bound the average failure probability across sequences. 
That is, defining 
\[
g(W) := \mathbb{E}_{X}[F(X,W)] = \Pr_{X \sim \mathcal{D}}\big[\hat{k}(X; W) \not\in [k^\star, (1+\varepsilon) k^\star]\big],
\] 
using the linearity of expectation we define the average probability of failure:
\begin{equation}\label{eq:obj_func1}
\mathbb{E}_W[g(W)] = \mathbb{E}_W \mathbb{E}_X[F(X,W)] = \mathbb{E}_X \Pr_W[F(X,W)=1] \le \delta.
\end{equation}

Since $\mathbb{E}_W[g(W)] \leq \delta$, by the probabilistic method there exists a deterministic $W^\star$ such that $g(W^\star) \leq \delta$, i.e., $\Pr_{X \sim \mathcal{D}}[\hat{k}(X; W^\star) \notin [k^\star, (1+\varepsilon)k^\star]] \leq \delta$, which completes the proof. \end{proof}

\newpage

\subsection{Theorem 2: Efficiently finding weights from count-only data}\label{app:thm3}
\begin{theorem}[Theorem \ref{thm:learn}: Learning a max-aggregator readout]
\label{thm:2-learned}
Under the conditions of Theorem~\ref{thm:main} there exists a polynomial-time learning algorithm which, given
$N=\operatorname{poly}(\ln n,1/\varepsilon,\ln(1/\delta))$ i.i.d. training
samples $(X_1,k_1),\dots,(X_N,k_N)\sim\mathcal{D}$, outputs a monotone
readout function $f:\mathbb{R}\to\mathbb{R}$ such that, for a fresh test
sample $(X,k)\sim\mathcal{D}$,

\[k \;\le\; f(S(X)) \;\le\; (1+\varepsilon)k
\]
with probability at least $1-\delta$.
\end{theorem}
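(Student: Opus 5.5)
The plan is to reduce learning to fitting a one-dimensional monotone step function. We fix the sketch first and then learn thresholds from data. By Corollary~\ref{thm:deterministic-max-aggregator-simple}, applied with failure parameter $\delta/2$, there is a fixed $W^\star$ under which the oracle estimator $\hat k(X;W^\star)$ errs on at most a $\delta/2$ fraction of $\mathcal D$. Alternatively, a fresh Gaussian draw works with constant probability by Markov's inequality, and a few repetitions can be validated on held-out data. This oracle estimator is itself a monotone step function of $S(X)$: it is $t_{r+1}$ on the interval $(\theta_{r-1},\theta_r]$. So the class
\[
\mathcal F=\{f:\ f \text{ nondecreasing, piecewise constant with values in the grid } \{t_0,\dots,t_R\}\}
\]
contains a readout with error at most $\delta/2$ under the loss $\ell(f;X,k)=\mathbbm{1}\{f(S(X))\notin[k,(1+\eps)k]\}$.

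The learning algorithm is empirical risk minimization over $\mathcal F$, given the samples $(S(X_i),k_i)$. First we sort the $N$ scalars $S(X_i)$. Each $f\in\mathcal F$ is determined by $R=O(\ln n/\eps)$ breakpoints. Since only the relative order of breakpoints and sample points matters, the minimizer can be computed by dynamic programming over (sorted sample index, current grid level). This runs in $O(NR)$ time and returns a monotone $f$ minimizing the empirical error count. Alternatively, one can learn each threshold separately by one-dimensional ERM on the binary labels $\mathbbm{1}\{k_i\le t_r\}$ and then take a running max to restore monotonicity. That version mirrors the tests in Step 3 of the proof of Theorem~\ref{thm:1-full}.

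For generalization, we note that each loss class is defined by $R$ thresholds on the real line. Its VC/pseudo-dimension is therefore $O(R)$; equivalently, each of the $R$ threshold classes has VC dimension $1$. Standard uniform convergence then shows that, with probability $1-\delta/2$ over the sample,
\[
N=O\!\left(\frac{R\ln R+\ln(1/\delta)}{\delta^2}\right)
\]
samples suffice to make the excess risk of ERM at most $\delta/2$. For realizable-style bounds we may use the $1/\delta$ rate with the oracle error as approximation error. Combining the approximation and estimation terms gives total test failure probability at most $\delta$.

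The main obstacle is the sample-complexity claim $N=\mathrm{poly}(\ln n,1/\eps,\ln(1/\delta))$. The agnostic bound above scales as $\mathrm{poly}(1/\delta)$, not $\ln(1/\delta)$, because the target accuracy is $\delta$ itself. To fix this, I would first try to exploit the fact that the oracle rule errs only through concentration failures, which are exponentially small in $m$. I would redefine the guarantee to allow failure probability $\delta$ both over the data and over the test point, with the test-error target a constant fraction of $\delta$. If the statement insists on $\ln(1/\delta)$, I would argue via the per-threshold tests. In each test, the labels are $k_i\le t_r$ versus $k_i\ge t_{r+1}$, and the classes are separated in $S$ by a margin $\mathrm{Gap}(t)\ge c\eps/\sqrt{\ln n}$ up to exponentially small tails. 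Then any threshold placed between the empirical class supports generalizes. Failing a fully clean argument, I would state the bound with $\mathrm{poly}(1/\delta)$ and note the dependence explicitly.
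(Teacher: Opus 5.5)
\paragraph{Where your proposal stands.} Your reduction is the paper's reduction. You exhibit an oracle readout that is a monotone step function of $S$ with $O(\ln n/\varepsilon)$ breakpoints, then learn the breakpoints by VC and uniform-convergence arguments. Your choice of oracle is the cleaner one. Theorem~\ref{thm:1-full}'s own estimator is monotone, because the index $\min\{r: S\le\theta_r\}$ is nondecreasing in $S$ for any thresholds. This avoids the paper's $G_k=\E[S(X)\mid k]$, which is an average over $X$ for fixed weights that the paper then handles with bounds valid only for averages over $W$. You also need neither the nonconstructive, $\mathcal D$-dependent $W^\star$ from the Corollary nor repeated draws. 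The statement's probability is joint over weights, training set and test point, so you can average the ERM guarantee over $W$ using $\E_W\Pr_{X}(\hat k(X;W)\text{ fails})\le\delta'$.

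\paragraph{The gap.} It is the one you flag yourself. Your bound is $N=\tilde O(R/\delta^2)$, or $\tilde O(R/\delta)$ via relative-deviation rates since the best-in-class error is $O(\delta)$. It is not $\operatorname{poly}(\ln(1/\delta))$, and neither proposed repair closes this. The margin argument fails because the empirical class supports need not reach the population ones. Suppose $\mathcal D$ puts mass $2\delta$ on a single value $k_0$ and the rest of its mass far away. Then a sample of size $o(1/\delta)$ typically contains no $k_0$. A threshold placed anywhere between the observed supports may send every $k_0$-input outside $[k_0,(1+\varepsilon)k_0]$. The exponentially small oracle error does not help, because the difficulty is estimation, not approximation. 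The paper's proof has exactly the same defect. It learns each breakpoint from $N_0=O(\zeta^{-1}\ln(T/\delta))$ samples and then sets $\zeta=\Theta(\delta/T)$. That gives $N=TN_0=\Theta(T^2\delta^{-1}\ln(T/\delta))$ with $T=O(\ln n/\varepsilon)$, which is polynomial in $1/\delta$. So its concluding $\operatorname{poly}(\ln(1/\delta))$ claim does not follow from its own computation. Your diagnosis is sharper than the paper's.

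\paragraph{The missing idea.} The stated bound comes from using the structure rather than the data. Every input allowed by the hypotheses is also $(\bar\eta,\bar\rho)$-clusterable, with $\bar\rho=c_\rho\varepsilon/\ln n$ and $\bar\eta=c_\eta\varepsilon^2/(\ln n)^2$, using the same centers and assignment. So Theorem~\ref{thm:1-full}'s thresholds, computed at $(\bar\eta,\bar\rho)$, are data-independent and valid for all such inputs. The resulting readout is monotone. Since $W$ is independent of the test point, Theorem~\ref{thm:1-full} gives $\Pr_{W,(X,k)}(\text{failure})=\E_{(X,k)}\Pr_W(\text{failure})\le\delta$ for every $\mathcal D$. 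Hence the statement holds for an algorithm that ignores its samples entirely. ERM over your class $\mathcal F$ carries no such guarantee, since the rare-level example applies to it. If you want a genuinely data-driven theorem, the honest version is the $\operatorname{poly}(1/\delta)$ bound you propose, which is also all the paper's argument actually establishes.
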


\begin{proof}

For fixed Gaussian weights $w_1,\dots,w_m$, define
\[
G_k \;:=\; \mathbb{E}\big[S(X)\mid X\text{ has exactly }k\text{ centers}\big].
\]

\paragraph{Step 1: Concentration of the sketch.}
For any fixed sequence $X$, the map
$w\mapsto \max_i\langle w,x_i\rangle$ is $1$-Lipschitz.
By Gaussian concentration,
\[
\Pr\big(|S(X)-\mathbb{E}S(X)|\ge t\big)
\;\le\;
2\exp\!\Big(-\frac{m t^2}{2}\Big).
\]
Let
\[
\Delta := c_0\frac{\varepsilon}{\sqrt{\ln n}},
\]
where $c_0>0$ is the constant from Lemma~\ref{lem:gap-full}.
Fix $\zeta\in(0,1)$.
If
\[
m \;\ge\; C_0\cdot\frac{\ln n}{\varepsilon^2}\cdot\ln\!\Big(\frac{1}{\zeta}\Big),
\]
then by integrating the above tail bound over $X\sim\mathcal D_k$ and
applying Markov's inequality, we obtain:
with probability at least $1-\zeta$ over the draw of the Gaussian weights,
\[
\Pr_{(X,k)\sim\mathcal D}\Big(
|S(X)-G_k| \ge \tfrac{\Delta}{4}
\Big)
\;\le\;
\zeta.
\]
Condition on this event for the remainder of the proof.

\paragraph{Step 2: Monotonicity and separation.}
By Lemmas~\ref{lem:perturb}, \ref{lem:slepian-full}, and
\ref{lem:gap-full}, the function $k\mapsto G_k$ is nondecreasing and
satisfies the uniform gap
\[
G_{(1+\varepsilon)k} - G_k \;\ge\; \Delta
\qquad\text{for all } k\ge 2.
\]

\paragraph{Step 3: Discretization and oracle monotone readout.}
Define multiplicative levels
\[
L_0 = 1,
\qquad
L_{t+1} = \lceil (1+\varepsilon)L_t\rceil,
\]
and let $T=\min\{t:L_t\ge n\}$.
Note $T = O((1/\varepsilon)\ln n)$.

For each $t=0,\dots,T-1$, choose a threshold
\[
\tau_t \in
\big(G_{L_t}+\tfrac{\Delta}{4},\,G_{L_{t+1}}-\tfrac{\Delta}{4}\big),
\]
which is nonempty by Step~2.
Define the oracle monotone function
\[
f^\star(s)
\;:=\;
L_{t+1}
\quad\text{if } s\in[\tau_t,\tau_{t+1}).
\]
By construction and the concentration event of Step~1,
\[
\Pr_{(X,k)\sim\mathcal D}
\big(
f^\star(S(X)) \notin [k,(1+\varepsilon)k]
\big)
\;\le\;
O(\zeta).
\]

\paragraph{Step 4: Learning the monotone readout.}
By Step~3, there exists an oracle monotone step function $f^\star$ with at
most $T$ breakpoints $\tau^\star_0 < \dots < \tau^\star_{T-1}$ and range
contained in $\{L_1,\dots,L_T\}$, satisfying
\[
\Pr_{(X,k)\sim\mathcal D}\!\big(f^\star(S(X))\notin[k,(1+\varepsilon)k]\big)
\;\le\;O(\zeta).
\]
Since $f^\star$ is determined by its $T$ breakpoints, learning $f^\star$
from data reduces to learning $T$ one-dimensional thresholds. Each such
threshold is a binary classifier on $\mathbb{R}$ and the class of one-dimensional threshold
functions has VC dimension $1$. Standard uniform convergence bounds for
VC classes (see, e.g., \cite{shalev2014understanding}) then imply that
each breakpoint can be learned, from
\[
N_0 \;=\; O\!\Big(\tfrac{1}{\zeta}\ln\tfrac{T}{\delta}\Big)
\]
i.i.d.\ samples, with excess population error at most $\zeta$ and
probability of failure at most $\delta/T$. A union bound over the $T$
breakpoints implies that, with probability at least $1-\delta$ over the
training sample,
\emph{all} learned breakpoints $\hat\tau_0,\dots,\hat\tau_{T-1}$ are
simultaneously $\zeta$-accurate.

Let $\hat f$ be the resulting monotone step function. Because $f^\star$
was constructed so that its output always lies in the correct
multiplicative interval (Step~3), and $\hat f$ approximates $f^\star$ in
population loss up to $O(\zeta)$, the correctness guarantee transfers:
\[
\Pr_{(X,k)\sim\mathcal D}\!\left(
\hat f(S(X))\notin[k,(1+\varepsilon)k]
\right)
\;\le\;O(\zeta).
\]
The total sample complexity is
$N = T \cdot N_0 = \operatorname{poly}(\ln n,\,1/\varepsilon,\,\ln(1/\delta))$.

\paragraph{Step 5: Parameter choice and conclusion.}
Recall that the number of multiplicative levels satisfies
\(T = O\!\left(\frac{1}{\varepsilon}\ln n\right).\) Set \(\zeta = \Theta\!\left(\frac{\delta}{T}\right).
\) With this choice, a union bound over the $T$ level transitions ensures
that the total failure probability is at most $\delta$.

Substituting this value of $\zeta$ into the concentration requirement of
Step~1 yields the sketch dimension
\[
m \;\ge\;
C\cdot
\frac{\ln n}{\varepsilon^2}\,
\ln\!\Big(\frac{\ln n}{\varepsilon\delta}\Big),
\]
while the sample complexity in Step~4 becomes
\[
N = \operatorname{poly}\!\big(\ln n,\,1/\varepsilon,\,\ln(1/\delta)\big).
\]

Taking a union bound over the randomness of the Gaussian weights, the
training sample, and the test sample completes the proof.

\end{proof}

\section{Experimental Details}

\begin{figure}[!hbt]
    \centering
    \includegraphics[width=0.7\linewidth]{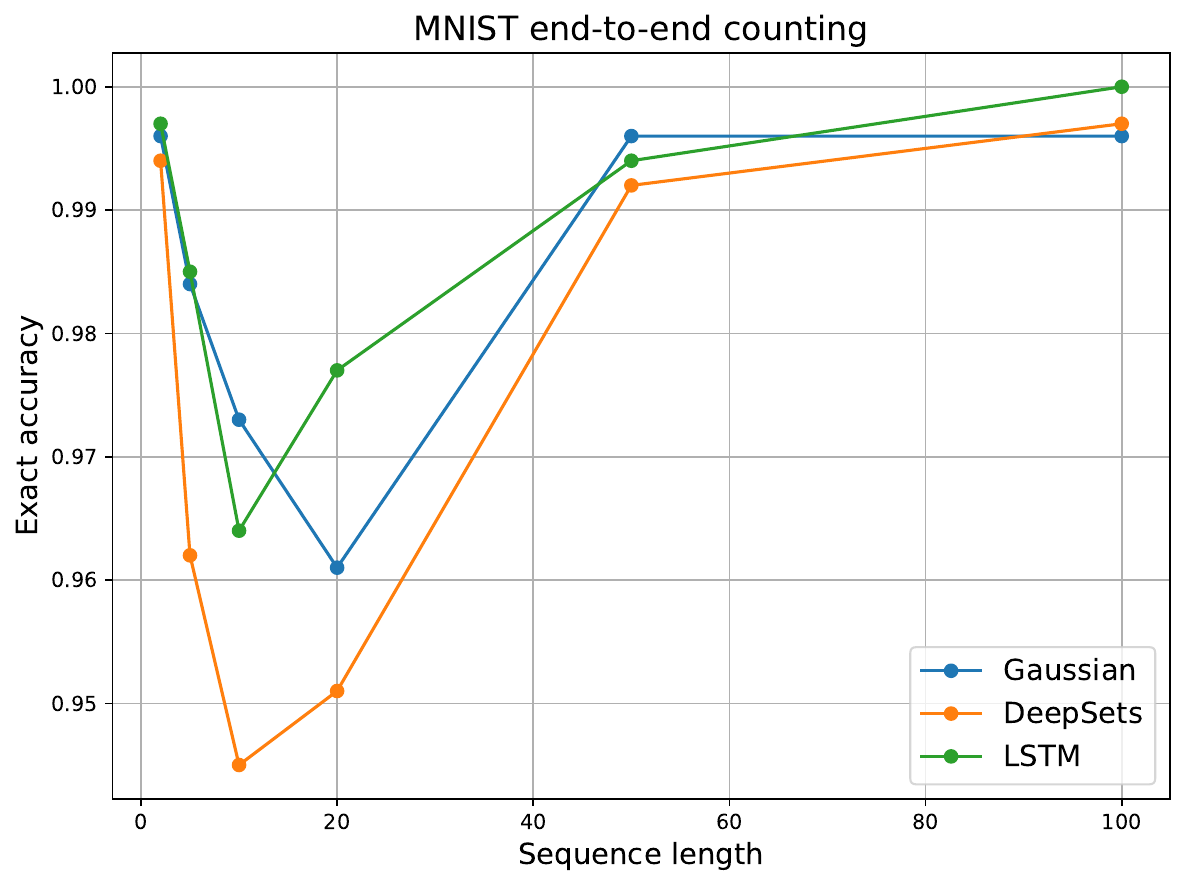}
    \caption{Exact counting accuracy of three baseline models, MaxSketch, DeepSets, and LSTM, on MNIST sequences of varying lengths. All models achieve near-perfect accuracy across all sequence lengths, and the $\pm 1$ accuracy reaches 100\%, indicating that predictions are at most one count away from the true number of digits.}
    \label{fig:mnist_count_generalization}
\end{figure}

\subsection{MNIST}

\paragraph{Dataset and preprocessing.}
We use the standard MNIST dataset consisting of 60{,}000 training images and 10{,}000 test images. All images are scaled to the range $[0,1]$ and standardized using the training set mean and standard deviation. No data augmentation is applied.

\paragraph{Encoder.}
We use a small convolutional neural network composed of two convolutional blocks. The first block applies a $3\times3$ convolution with 32 channels, followed by batch normalization, a ReLU activation, and $2\times2$ max pooling. The second block applies a $3\times3$ convolution with 64 channels, again followed by batch normalization, ReLU, and $2\times2$ max pooling. The resulting feature maps are flattened and passed through a linear layer producing an embedding of dimension $d = 256$. All embeddings are L2-normalized before being passed to the aggregation module.

\paragraph{Aggregators.}
We evaluate three aggregation mechanisms. MaxSketch uses $m = 4{,}096$ fixed random Gaussian projections and computes the maximum projection along each direction. These statistics are then passed through a two-layer MLP with hidden width $256$ and ReLU activation to produce the final count prediction. DeepSets applies a two-layer MLP with hidden width $256$ and output width $256$ with ReLU activation to each embedding individually, aggregates representations via summation, and then applies a final two-layer MLP with hidden width $256$ and ReLU activation to predict the count. The LSTM baseline processes the sequence of embeddings using a single-layer LSTM with hidden size $256$, and the final hidden state is mapped to the count via a linear layer.

\paragraph{Training.}
All models are trained using the Adam optimizer with learning rate $0.001$ and batch size $32$ sequences. The loss function is mean squared error between predicted and true counts. Training is performed for $50$ epochs, with evaluation after each epoch.

\paragraph{Stream construction.}
Each training sequence is generated by first sampling $k \sim \mathrm{Uniform}\{1,\dots,10\}$. We then sample $k$ classes uniformly without replacement, and construct a sequence of length $n$ by repeatedly sampling a class uniformly from the chosen set and then sampling an image uniformly from that class. Each sequence is resampled independently at every epoch. Test sequences are constructed using the same procedure but with test images.

\paragraph{Experiments.}
We evaluate accuracy across sequence lengths by training and testing at each $n \in \{2, 5, 10, 20, 50, 100\}$ independently. For extrapolation, MaxSketch is trained at $n_{\text{train}} \in \{10, 20, 50\}$ and tested at $n_{\text{test}} \in \{150, 250, 500\}$. In addition, we compute 1-nearest-neighbour accuracy using cosine distance to the nearest training embedding.

\subsection{CIFAR-10}

\paragraph{Dataset and preprocessing.}
We use the standard CIFAR-10 dataset with 50{,}000 training images and 10{,}000 test images. Images are normalized using channel-wise statistics computed on the training set. During pretraining, we apply random cropping with padding 4 and random horizontal flipping. No augmentation is used during count training or evaluation.

\paragraph{Encoder.}
The encoder is a convolutional neural network composed of four convolutional blocks. Each block consists of a $3\times3$ convolution with $C$ channels, followed by batch normalization, ReLU activation, and $2\times2$ max pooling. Channel widths are given by $64$. The resulting features are flattened and projected into an embedding space of dimension $d = 256$, followed by L2 normalization.

\paragraph{Aggregators.}
We use the same aggregation architectures as in MNIST, namely MaxSketch, DeepSets, and LSTM, with identical configurations except where otherwise specified.

\paragraph{Training regimes.}
We evaluate three regimes. In the supervised frozen regime, the encoder is pretrained on CIFAR-10 using cross-entropy loss for $50$ epochs, achieving a test accuracy of $91\%$, and is then frozen during count training. Only the readout module is trained. In the supervised fine-tuning regime, the pretrained encoder is further fine-tuned during count training using learning rate $0.001$, while the readout is trained simultaneously with learning rate $0.01$. In the end-to-end regime, the encoder is initialized randomly and trained jointly with the readout using only count supervision, without access to class labels.

\paragraph{Training.}
All regimes use the Adam optimizer and mean squared error loss. Training is performed with batch size $32$ sequences for $50$ epochs.

\paragraph{Stream construction and experiments.}
Stream construction follows the same procedure as MNIST, applied to CIFAR-10 classes and images. Models are trained at $n_{\text{train}} \in \{10, 20, 50\}$  and evaluated at the same for each regime, as reported in Table \ref{tab:cifar_frozen}.

\subsection{Faces (LFW + CelebA + Multi-PIE)}\label{app:faces}

\paragraph{Datasets.}
We construct a unified face dataset by merging three benchmarks: LFW, CelebA, and CMU Multi-PIE. From each dataset we retain identities with at least two images. The resulting identity pool contains approximately 10{,}000 identities, which are split uniformly at random by identity (not by image) into a calibration set and an evaluation set of equal size (5{,}103 identities each). 

\paragraph{Embeddings.}
We use the DeepFace \cite{serengil2024benchmark} implementation of the VGG-Face model to extract 4{,}096-dimensional embeddings. The model uses its default detector and alignment pipeline. All embeddings are L2-normalized and the encoder is kept frozen throughout all experiments.

\paragraph{MaxSketch configuration.}
MaxSketch uses $m = 4{,}096$ fixed random Gaussian projections drawn from $\mathcal{N}(0, I_{4096})$. For a stream $X$, we compute the statistic $S(X) = \frac{1}{m} \sum_{j=1}^m \max_i \langle w_j, x_i \rangle$. The readout function is a one-dimensional isotonic regression model trained on the calibration split using scikit-learn's IsotonicRegression. Calibration streams are generated using identities from the calibration set with $k$ sampled uniformly from the evaluation range.

\paragraph{Stream construction.}
Each evaluation stream is constructed by first sampling a target value $k \in \{100, 200, \dots, 5100\}$. We then sample $k$ identities uniformly without replacement from the evaluation set. A stream of length $n = 20{,}000$ is generated by repeatedly sampling an identity uniformly from this set and then sampling an image uniformly from that identity. For each value of $k$, we generate $100$ independent streams.

\paragraph{Zhang baseline.}
We evaluate the robust $F_0$-estimation algorithm of Zhang \cite{zhang2025robust}, corresponding to Algorithm 1 in metric spaces. The distance threshold parameter $\alpha$ is swept over a predefined range $\{0.2,0.21,0.22,\dots,0.8\}$, and the optimal value $\alpha^* = 0.40$ is selected based on minimum mean absolute error on the calibration set. All distances are computed using Euclidean distance on L2-normalized embeddings. Remaining hyperparameters follow the original implementation.

%%%%%%%%%%%%%%%%%%%%%%%%%%%%%%%%%%%%%%%%%%%%%%%%%%%%%%%%%%%%

\newpage
\section*{NeurIPS Paper Checklist}

\begin{enumerate}

\item {\bf Claims}
    \item[] Question: Do the main claims made in the abstract and introduction accurately reflect the paper's contributions and scope?
    \item[] Answer:\answerYes{} % Replace by \answerYes{}, \answerNo{}, or \answerNA{}.
    \item[] Justification: The abstract and introduction accurately reflect the paper’s theoretical contributions, assumptions, and scope. In particular, they clearly state the clusterability assumptions, the Gaussian sketching framework, and the multiplicative approximation guarantees proved in the paper.
    \item[] Guidelines:
    \begin{itemize}
        \item The answer \answerNA{} means that the abstract and introduction do not include the claims made in the paper.
        \item The abstract and/or introduction should clearly state the claims made, including the contributions made in the paper and important assumptions and limitations. A \answerNo{} or \answerNA{} answer to this question will not be perceived well by the reviewers. 
        \item The claims made should match theoretical and experimental results, and reflect how much the results can be expected to generalize to other settings. 
        \item It is fine to include aspirational goals as motivation as long as it is clear that these goals are not attained by the paper. 
    \end{itemize}

\item {\bf Limitations}
    \item[] Question: Does the paper discuss the limitations of the work performed by the authors?
    \item[] Answer: \answerYes{} % Replace by \answerYes{}, \answerNo{}, or \answerNA{}.
    \item[] Justification: The paper discusses the main limitations and assumptions of the approach, including the $(\eta,\rho)$-clusterability assumption, the dependence of the guarantees on near-orthogonality of cluster centers, and the distinction between the fixed-sketch theoretical setting and the harder end-to-end learned embedding regime explored experimentally.
    \item[] Guidelines:
    \begin{itemize}
        \item The answer \answerNA{} means that the paper has no limitation while the answer \answerNo{} means that the paper has limitations, but those are not discussed in the paper. 
        \item The authors are encouraged to create a separate ``Limitations'' section in their paper.
        \item The paper should point out any strong assumptions and how robust the results are to violations of these assumptions (e.g., independence assumptions, noiseless settings, model well-specification, asymptotic approximations only holding locally). The authors should reflect on how these assumptions might be violated in practice and what the implications would be.
        \item The authors should reflect on the scope of the claims made, e.g., if the approach was only tested on a few datasets or with a few runs. In general, empirical results often depend on implicit assumptions, which should be articulated.
        \item The authors should reflect on the factors that influence the performance of the approach. For example, a facial recognition algorithm may perform poorly when image resolution is low or images are taken in low lighting. Or a speech-to-text system might not be used reliably to provide closed captions for online lectures because it fails to handle technical jargon.
        \item The authors should discuss the computational efficiency of the proposed algorithms and how they scale with dataset size.
        \item If applicable, the authors should discuss possible limitations of their approach to address problems of privacy and fairness.
        \item While the authors might fear that complete honesty about limitations might be used by reviewers as grounds for rejection, a worse outcome might be that reviewers discover limitations that aren't acknowledged in the paper. The authors should use their best judgment and recognize that individual actions in favor of transparency play an important role in developing norms that preserve the integrity of the community. Reviewers will be specifically instructed to not penalize honesty concerning limitations.
    \end{itemize}

\item {\bf Theory assumptions and proofs}
    \item[] Question: For each theoretical result, does the paper provide the full set of assumptions and a complete (and correct) proof?
    \item[] Answer: \answerYes{} % Replace by \answerYes{}, \answerNo{}, or \answerNA{}.
    \item[] Justification: All lemmas, corollaries, and theorems are stated with explicit assumptions, and complete proofs are provided in the appendix.
    \item[] Guidelines:
    \begin{itemize}
        \item The answer \answerNA{} means that the paper does not include theoretical results. 
        \item All the theorems, formulas, and proofs in the paper should be numbered and cross-referenced.
        \item All assumptions should be clearly stated or referenced in the statement of any theorems.
        \item The proofs can either appear in the main paper or the supplemental material, but if they appear in the supplemental material, the authors are encouraged to provide a short proof sketch to provide intuition. 
        \item Inversely, any informal proof provided in the core of the paper should be complemented by formal proofs provided in appendix or supplemental material.
        \item Theorems and Lemmas that the proof relies upon should be properly referenced. 
    \end{itemize}

    \item {\bf Experimental result reproducibility}
    \item[] Question: Does the paper fully disclose all the information needed to reproduce the main experimental results of the paper to the extent that it affects the main claims and/or conclusions of the paper (regardless of whether the code and data are provided or not)?
    \item[] Answer: \answerYes{} % Replace by \answerYes{}, \answerNo{}, or \answerNA{}.
    \item[] Justification: The paper provides sufficient implementation and experimental details, including dataset construction, model architecture, training setup, and evaluation protocol, to reproduce the main empirical results.
    \item[] Guidelines:
    \begin{itemize}
        \item The answer \answerNA{} means that the paper does not include experiments.
        \item If the paper includes experiments, a \answerNo{} answer to this question will not be perceived well by the reviewers: Making the paper reproducible is important, regardless of whether the code and data are provided or not.
        \item If the contribution is a dataset and\slash or model, the authors should describe the steps taken to make their results reproducible or verifiable. 
        \item Depending on the contribution, reproducibility can be accomplished in various ways. For example, if the contribution is a novel architecture, describing the architecture fully might suffice, or if the contribution is a specific model and empirical evaluation, it may be necessary to either make it possible for others to replicate the model with the same dataset, or provide access to the model. In general. releasing code and data is often one good way to accomplish this, but reproducibility can also be provided via detailed instructions for how to replicate the results, access to a hosted model (e.g., in the case of a large language model), releasing of a model checkpoint, or other means that are appropriate to the research performed.
        \item While NeurIPS does not require releasing code, the conference does require all submissions to provide some reasonable avenue for reproducibility, which may depend on the nature of the contribution. For example
        \begin{enumerate}
            \item If the contribution is primarily a new algorithm, the paper should make it clear how to reproduce that algorithm.
            \item If the contribution is primarily a new model architecture, the paper should describe the architecture clearly and fully.
            \item If the contribution is a new model (e.g., a large language model), then there should either be a way to access this model for reproducing the results or a way to reproduce the model (e.g., with an open-source dataset or instructions for how to construct the dataset).
            \item We recognize that reproducibility may be tricky in some cases, in which case authors are welcome to describe the particular way they provide for reproducibility. In the case of closed-source models, it may be that access to the model is limited in some way (e.g., to registered users), but it should be possible for other researchers to have some path to reproducing or verifying the results.
        \end{enumerate}
    \end{itemize}

\item {\bf Open access to data and code}
    \item[] Question: Does the paper provide open access to the data and code, with sufficient instructions to faithfully reproduce the main experimental results, as described in supplemental material?
    \item[] Answer: \answerNo{} % Replace by \answerYes{}, \answerNo{}, or \answerNA{}.
    \item[] Justification: The main contribution of the paper is theoretical. While code and data are not currently released, the paper provides sufficient algorithmic, mathematical, and experimental details to reproduce the proposed method and empirical evaluations.
    \item[] Guidelines:
    \begin{itemize}
        \item The answer \answerNA{} means that paper does not include experiments requiring code.
        \item Please see the NeurIPS code and data submission guidelines (\url{https://neurips.cc/public/guides/CodeSubmissionPolicy}) for more details.
        \item While we encourage the release of code and data, we understand that this might not be possible, so \answerNo{} is an acceptable answer. Papers cannot be rejected simply for not including code, unless this is central to the contribution (e.g., for a new open-source benchmark).
        \item The instructions should contain the exact command and environment needed to run to reproduce the results. See the NeurIPS code and data submission guidelines (\url{https://neurips.cc/public/guides/CodeSubmissionPolicy}) for more details.
        \item The authors should provide instructions on data access and preparation, including how to access the raw data, preprocessed data, intermediate data, and generated data, etc.
        \item The authors should provide scripts to reproduce all experimental results for the new proposed method and baselines. If only a subset of experiments are reproducible, they should state which ones are omitted from the script and why.
        \item At submission time, to preserve anonymity, the authors should release anonymized versions (if applicable).
        \item Providing as much information as possible in supplemental material (appended to the paper) is recommended, but including URLs to data and code is permitted.
    \end{itemize}

\item {\bf Experimental setting/details}
    \item[] Question: Does the paper specify all the training and test details (e.g., data splits, hyperparameters, how they were chosen, type of optimizer) necessary to understand the results?
    \item[] Answer: \answerYes{} % Replace by \answerYes{}, \answerNo{}, or \answerNA{}.
    \item[] Justification: The paper specifies the full experimental setup, including data construction, model architecture, hyperparameters, optimization procedure, and evaluation protocol, with additional implementation details provided in the appendix sufficient to understand and interpret the reported results.
    \item[] Guidelines:
    \begin{itemize}
        \item The answer \answerNA{} means that the paper does not include experiments.
        \item The experimental setting should be presented in the core of the paper to a level of detail that is necessary to appreciate the results and make sense of them.
        \item The full details can be provided either with the code, in appendix, or as supplemental material.
    \end{itemize}

\item {\bf Experiment statistical significance}
    \item[] Question: Does the paper report error bars suitably and correctly defined or other appropriate information about the statistical significance of the experiments?
    \item[] Answer: \answerYes{} % Replace by \answerYes{}, \answerNo{}, or \answerNA{}.
    \item[] Justification: The experiments are reported based on multiple runs of the proposed method over random draws of Gaussian projections and dataset samples.
    \item[] Guidelines:
    \begin{itemize}
        \item The answer \answerNA{} means that the paper does not include experiments.
        \item The authors should answer \answerYes{} if the results are accompanied by error bars, confidence intervals, or statistical significance tests, at least for the experiments that support the main claims of the paper.
        \item The factors of variability that the error bars are capturing should be clearly stated (for example, train/test split, initialization, random drawing of some parameter, or overall run with given experimental conditions).
        \item The method for calculating the error bars should be explained (closed form formula, call to a library function, bootstrap, etc.)
        \item The assumptions made should be given (e.g., Normally distributed errors).
        \item It should be clear whether the error bar is the standard deviation or the standard error of the mean.
        \item It is OK to report 1-sigma error bars, but one should state it. The authors should preferably report a 2-sigma error bar than state that they have a 96\% CI, if the hypothesis of Normality of errors is not verified.
        \item For asymmetric distributions, the authors should be careful not to show in tables or figures symmetric error bars that would yield results that are out of range (e.g., negative error rates).
        \item If error bars are reported in tables or plots, the authors should explain in the text how they were calculated and reference the corresponding figures or tables in the text.
    \end{itemize}

\item {\bf Experiments compute resources}
    \item[] Question: For each experiment, does the paper provide sufficient information on the computer resources (type of compute workers, memory, time of execution) needed to reproduce the experiments?
    \item[] Answer: \answerNo{} % Replace by \answerYes{}, \answerNo{}, or \answerNA{}.
    \item[] Justification: The paper does not provide detailed information on compute resources such as hardware specifications, memory usage, or exact runtime per experiment, as the experiments are lightweight and primarily involve sampling Gaussian projections and computing simple statistics.
    \item[] Guidelines:
    \begin{itemize}
        \item The answer \answerNA{} means that the paper does not include experiments.
        \item The paper should indicate the type of compute workers CPU or GPU, internal cluster, or cloud provider, including relevant memory and storage.
        \item The paper should provide the amount of compute required for each of the individual experimental runs as well as estimate the total compute. 
        \item The paper should disclose whether the full research project required more compute than the experiments reported in the paper (e.g., preliminary or failed experiments that didn't make it into the paper). 
    \end{itemize}
    
\item {\bf Code of ethics}
    \item[] Question: Does the research conducted in the paper conform, in every respect, with the NeurIPS Code of Ethics \url{https://neurips.cc/public/EthicsGuidelines}?
    \item[] Answer: \answerYes{} % Replace by \answerYes{}, \answerNo{}, or \answerNA{}.
    \item[] Justification: The research uses only synthetic or standard benchmark-style data and mathematical modeling, does not involve human subjects or personal data, and does not raise known ethical concerns related to privacy, safety, discrimination, or misuse, thus conforming to the NeurIPS Code of Ethics.
    \item[] Guidelines:
    \begin{itemize}
        \item The answer \answerNA{} means that the authors have not reviewed the NeurIPS Code of Ethics.
        \item If the authors answer \answerNo, they should explain the special circumstances that require a deviation from the Code of Ethics.
        \item The authors should make sure to preserve anonymity (e.g., if there is a special consideration due to laws or regulations in their jurisdiction).
    \end{itemize}

\item {\bf Broader impacts}
    \item[] Question: Does the paper discuss both potential positive societal impacts and negative societal impacts of the work performed?
    \item[] Answer: \answerNA{} % Replace by \answerYes{}, \answerNo{}, or \answerNA{}.
    \item[] Justification: The work is primarily theoretical in nature, focusing on Gaussian sketching and statistical guarantees for cluster estimation, and does not involve deployment-oriented systems or direct applications; therefore, broader societal impacts are minimal and not explicitly discussed.
    \item[] Guidelines:
    \begin{itemize}
        \item The answer \answerNA{} means that there is no societal impact of the work performed.
        \item If the authors answer \answerNA{} or \answerNo, they should explain why their work has no societal impact or why the paper does not address societal impact.
        \item Examples of negative societal impacts include potential malicious or unintended uses (e.g., disinformation, generating fake profiles, surveillance), fairness considerations (e.g., deployment of technologies that could make decisions that unfairly impact specific groups), privacy considerations, and security considerations.
        \item The conference expects that many papers will be foundational research and not tied to particular applications, let alone deployments. However, if there is a direct path to any negative applications, the authors should point it out. For example, it is legitimate to point out that an improvement in the quality of generative models could be used to generate Deepfakes for disinformation. On the other hand, it is not needed to point out that a generic algorithm for optimizing neural networks could enable people to train models that generate Deepfakes faster.
        \item The authors should consider possible harms that could arise when the technology is being used as intended and functioning correctly, harms that could arise when the technology is being used as intended but gives incorrect results, and harms following from (intentional or unintentional) misuse of the technology.
        \item If there are negative societal impacts, the authors could also discuss possible mitigation strategies (e.g., gated release of models, providing defenses in addition to attacks, mechanisms for monitoring misuse, mechanisms to monitor how a system learns from feedback over time, improving the efficiency and accessibility of ML).
    \end{itemize}
    
\item {\bf Safeguards}
    \item[] Question: Does the paper describe safeguards that have been put in place for responsible release of data or models that have a high risk for misuse (e.g., pre-trained language models, image generators, or scraped datasets)?
    \item[] Answer: \answerNA{} % Replace by \answerYes{}, \answerNo{}, or \answerNA{}.
    \item[] Justification: No safeguards are described, as the paper does not release high-risk data or models and is primarily theoretical in nature.
    \item[] Guidelines:
    \begin{itemize}
        \item The answer \answerNA{} means that the paper poses no such risks.
        \item Released models that have a high risk for misuse or dual-use should be released with necessary safeguards to allow for controlled use of the model, for example by requiring that users adhere to usage guidelines or restrictions to access the model or implementing safety filters. 
        \item Datasets that have been scraped from the Internet could pose safety risks. The authors should describe how they avoided releasing unsafe images.
        \item We recognize that providing effective safeguards is challenging, and many papers do not require this, but we encourage authors to take this into account and make a best faith effort.
    \end{itemize}

\item {\bf Licenses for existing assets}
    \item[] Question: Are the creators or original owners of assets (e.g., code, data, models), used in the paper, properly credited and are the license and terms of use explicitly mentioned and properly respected?
    \item[] Answer: \answerNA{} % Replace by \answerYes{}, \answerNo{}, or \answerNA{}.
    \item[] Justification: All datasets and models are cited in the references.
    \item[] Guidelines:
    \begin{itemize}
        \item The answer \answerNA{} means that the paper does not use existing assets.
        \item The authors should cite the original paper that produced the code package or dataset.
        \item The authors should state which version of the asset is used and, if possible, include a URL.
        \item The name of the license (e.g., CC-BY 4.0) should be included for each asset.
        \item For scraped data from a particular source (e.g., website), the copyright and terms of service of that source should be provided.
        \item If assets are released, the license, copyright information, and terms of use in the package should be provided. For popular datasets, \url{paperswithcode.com/datasets} has curated licenses for some datasets. Their licensing guide can help determine the license of a dataset.
        \item For existing datasets that are re-packaged, both the original license and the license of the derived asset (if it has changed) should be provided.
        \item If this information is not available online, the authors are encouraged to reach out to the asset's creators.
    \end{itemize}

\item {\bf New assets}
    \item[] Question: Are new assets introduced in the paper well documented and is the documentation provided alongside the assets?
    \item[] Answer: \answerNA{} % Replace by \answerYes{}, \answerNo{}, or \answerNA{}.
    \item[] Justification: No new assets are released.
    \item[] Guidelines:
    \begin{itemize}
        \item The answer \answerNA{} means that the paper does not release new assets.
        \item Researchers should communicate the details of the dataset\slash code\slash model as part of their submissions via structured templates. This includes details about training, license, limitations, etc. 
        \item The paper should discuss whether and how consent was obtained from people whose asset is used.
        \item At submission time, remember to anonymize your assets (if applicable). You can either create an anonymized URL or include an anonymized zip file.
    \end{itemize}

\item {\bf Crowdsourcing and research with human subjects}
    \item[] Question: For crowdsourcing experiments and research with human subjects, does the paper include the full text of instructions given to participants and screenshots, if applicable, as well as details about compensation (if any)? 
    \item[] Answer: \answerNA{} % Replace by \answerYes{}, \answerNo{}, or \answerNA{}.
    \item[] Justification: The paper does not involve crowdsourcing nor research with human subjects.
    \item[] Guidelines:
    \begin{itemize}
        \item The answer \answerNA{} means that the paper does not involve crowdsourcing nor research with human subjects.
        \item Including this information in the supplemental material is fine, but if the main contribution of the paper involves human subjects, then as much detail as possible should be included in the main paper. 
        \item According to the NeurIPS Code of Ethics, workers involved in data collection, curation, or other labor should be paid at least the minimum wage in the country of the data collector. 
    \end{itemize}

\item {\bf Institutional review board (IRB) approvals or equivalent for research with human subjects}
    \item[] Question: Does the paper describe potential risks incurred by study participants, whether such risks were disclosed to the subjects, and whether Institutional Review Board (IRB) approvals (or an equivalent approval/review based on the requirements of your country or institution) were obtained?
    \item[] Answer: \answerNA{} % Replace by \answerYes{}, \answerNo{}, or \answerNA{}.
    \item[] Justification: Paper does not involve crowdsourcing nor research with human subjects.
    \item[] Guidelines:
    \begin{itemize}
        \item The answer \answerNA{} means that the paper does not involve crowdsourcing nor research with human subjects.
        \item Depending on the country in which research is conducted, IRB approval (or equivalent) may be required for any human subjects research. If you obtained IRB approval, you should clearly state this in the paper. 
        \item We recognize that the procedures for this may vary significantly between institutions and locations, and we expect authors to adhere to the NeurIPS Code of Ethics and the guidelines for their institution. 
        \item For initial submissions, do not include any information that would break anonymity (if applicable), such as the institution conducting the review.
    \end{itemize}

\item {\bf Declaration of LLM usage}
    \item[] Question: Does the paper describe the usage of LLMs if it is an important, original, or non-standard component of the core methods in this research? Note that if the LLM is used only for writing, editing, or formatting purposes and does \emph{not} impact the core methodology, scientific rigor, or originality of the research, declaration is not required.
    %this research? 
    \item[] Answer: \answerNA{} % Replace by \answerYes{}, \answerNo{}, or \answerNA{}.
    \item[] Justification: The core method development in this research does not involve LLMs as any important, original, or non-standard components.
    \item[] Guidelines:
    \begin{itemize}
        \item The answer \answerNA{} means that the core method development in this research does not involve LLMs as any important, original, or non-standard components.
        \item Please refer to our LLM policy in the NeurIPS handbook for what should or should not be described.
    \end{itemize}

\end{enumerate}

\end{document}